\title{The Impact of Geometric Complexity on Neural Collapse in Transfer Learning}
\author{%
  Michael Munn\thanks{Equal contribution.} \\
  Google Research\\
  \texttt{munn@google.com} \\
   \And
  Benoit Dherin$^*$\\
  Google Research\\
  \texttt{dherin@google.com} \\
   \And
  Javier Gonzalvo\\
  Google Research\\
  \texttt{xavigonzalvo@google.com} \\
}
\newtheorem{theorem}{Theorem}[section]
\newtheorem{remark}[theorem]{Remark}
\newtheorem{proposition}[theorem]{Proposition}
\newcommand{\R}[0] {\mathbb R}
\DeclareMathOperator{\GC}{GC}
\DeclareMathOperator{\CDNV}{CDNV}
\DeclareMathOperator{\NC}{NC}
\DeclareMathOperator{\Var}{Var}
\DeclareMathOperator{\supp}{supp}
\DeclareMathOperator{\var}{Var}
\DeclareMathOperator{\argmin}{argmin}
\begin{document}

\maketitle

\begin{abstract}
Many of the recent remarkable advances in computer vision and language models can be attributed to the success of transfer learning via the pre-training of large foundation models. However, a theoretical framework which explains this empirical success is incomplete and remains an active area of research. Flatness of the loss surface and neural collapse have recently emerged as useful pre-training metrics which shed light on the implicit biases underlying pre-training. In this paper, we explore the geometric complexity of a model's learned representations as a fundamental mechanism that relates these two concepts. We show through experiments and theory that mechanisms which affect the geometric complexity of the pre-trained network also influence the neural collapse. Furthermore, we show how this effect of the geometric complexity generalizes to the neural collapse of new classes as well, thus encouraging better performance on downstream tasks, particularly in the few-shot setting.
\end{abstract}

\section{Introduction}

Many of the recent remarkable advances in modern machine learning owe their success in part to transfer learning \cite{devlin2018bert, he2016deep, brown2020language, caruana1994learning, bengio2012deep, ramesh2021zero, yosinski2014transferable, raffel2020exploring}. While there are different approaches to this technique, the standard one involves two stages. In a first stage, called pre-training, ones trains a deep neural network on a general, large-scale dataset in the form of a supervised or unsupervised source task; e.g., ImageNet or CIFAR-100 \cite{deng2009imagenet, krizhevsky2009learning} for image models or the Common Crawl, C4 or LM1B datasets \cite{carlini2021extracting, chelba2013one, raffel2020exploring} for language models.  In the second stage, one then leverages portions of the pre-trained network to use as features map or embeddings that can then be adapted, or fine-tuned, on a more specific target task. Often these target tasks are unknown at the time of pre-training and labeled data may be very scarce, such as in the context of few-shot learning \cite{vinyals2016matching, lee2019meta, ravi2016optimization}. However, despite these limitations, this approach often results in a fine-tuned model that achieves quite impressive performance substantially better than training on the target task alone and requiring less computational resources~\cite{review21transfer_learning}. Despite this empirical success, a comprehensive theoretical understanding of the mechanisms underlying this effectiveness of transfer learning is not fully understood and remains an active area of research~\cite{weiss16transfer_learning}.

One interesting line of research suggests that the effectiveness of transfer learning is due to the implicit biases encoded in pre-trained models \cite{galanti2022on, liu2023same, power2022grokking}. These implicit biases effectively constrain the hypothesis space, guiding the model towards solutions that have a preference for smoother functions \cite{choromanska2015loss, martin2021implicit, rosca2020a_case,chao2021sobolev,dherin2022neural}, simpler geometry in the loss surface \cite{barrett2021implicit, shirish2016large, dinh2017sharp, damian2021label,smith2021on} or reduced complexity of the internal learned representations \cite{gao2023towards, kothapalli2022neural}. In the same way that these implicit biases have been used to help explain the success of deep learning, recent work has shown that the notion of neural collapse \cite {galanti2022on, wang2023far, li2022principled} and flatness of the loss surface \cite{liu2023same} can also inform the mechanisms behind transfer learning. This suggests that these preferences during pre-training also act as a form of prior knowledge which is highly transferable to downstream tasks, even with limited task-specific data. 

In this paper, we present a novel perspective that further sheds light on these mechanisms and implicit biases hidden within transfer learning. Our approach analyzes the geometric complexity \cite{dherin2021geometric, dherin2022neural} of the internal representations learned by the deep neural networks during pre-training and provides a complementary theoretical framework which unifies previous work examining the role of neural collapse and loss surface flatness in transfer learning \cite{galanti2022on, liu2023same}. In particular, we show through experiments and theory that the geometric complexity of the pre-trained network directly controls the neural collapse of the pre-trained model and thus its efficacy in transfer learning, particularly in the few-shot setting.  We argue that geometric complexity (similarly to flatness of the loss surface and neural collapse) can be used as hidden progress metrics, cf. \cite{barak2022hidden}, for transfer learning, serving as an informative proxy toward the transfer power of a pre-trained network.

Our primary contributions are the following:

\begin{itemize}
    \item We uncover relationships between learning-path flatness, neural network geometric-complexity, and embedding neural-collapse providing a framework to understand how these implicit biases interact (Section \ref{section:stage_1}).
    \item We show through theory that the geometric complexity (GC) of a neural network controls neural collapse and verify this empirically by showing how mechanisms which regularize the GC in turn put pressure on the neural collapse (Section \ref{section:gc_controls_nc} and Fig. \ref{figure:cifar_vgg13_stage1_sweep}).
    \item We demonstrate both theoretically and empirically that pre-trained networks with lower GC promote lower neural collapse on new unseen target classes, and thus enable improved transfer accuracy during fine-tuning (Section \ref{section:stage_2} and Fig. \ref{figure:transfer_learning}).
    \item We prove a new generalization bound in terms of geometric complexity (Section \ref{section:gc_generalization_bound}).   
    \item We show that the empirical GC can be accurately and efficiently estimated on a small number of samples, input coordinates, and output features making it computationally tractable compared to other progress metrics in machine learning (Section \ref{section:gc_robustness} and Fig. \ref{fig:gc_robustness}).
\end{itemize}

\paragraph{Notation.} Throughout, we denote by $\|\cdot\|$ the L2 Euclidean norm and by $\|\cdot\|_F$ the Frobenius norm.

\section{Background and Related Work}\label{section:background}

The implicit biases introduced by an optimization algorithm play a crucial role in deep learning and in the generalization ability of the learned models \cite{neyshabur2014search, shirish2016large, wilson2017marginal, neyshabur2017implicit}. They help ensure that the model not only finds a solution with low error but also one with low complexity which generalizes well \cite{zhang2021understanding, gunasekar2018implicit}. Uncovering the mechanisms of these implicit regularizers is crucial for understanding how the model learns, both as a means to improve generalization and as valuable leverage for designing more efficient algorithms and decreasing costly experiment iteration cycles. 

Here we focus on three seemingly different themes behind implicit regularization in deep learning and explain how they are related: the loss surface slope measuring the flatness of the learning path \cite{barrett2021implicit, novak2018sensitivity, hochreiter1997flat}, the geometric complexity of the learned model function measuring its variability with respect to a dataset \cite{dherin2021geometric, munn2023neural}, and the neural collapse \cite{papyan2020prevalence, kothapalli2022neural} measuring how a neural network efficiently clusters its learned representations of the input class examples in embedding space.

\paragraph{Flatness in parameter space.}
In parameter space, the learning dynamics is fully characterized by the discrete optimization path $\theta_t$ where $t\in \mathbb N$. At each step, one can compute the flatness of the learning path as the slope of the tangent space at $\theta_t$ to the loss surface ${\cal L} =\{(\theta, L(\theta)):\: \theta \in \mathbb R^n\}$. As shown in \cite[Appendix A.2]{barrett2021implicit}, this slope coincides with the loss-gradient square norm: $\operatorname{slope}(\theta_t) = \|\nabla L(\theta_t)\|^2$. This quantity has been used to bound a generalization gap in \cite{ghosh2023implicit} and as a beneficial explicit regularizer in \cite{barrett2021implicit,geiping2022stochastic,smith2021on} indicating that learning curves with lower slope values tend to also have better test performance. 

Moreover, many standard optimizers and common training heuristics have been shown to put an implicit pressure on the learning-path flatness, making it an implicit bias of the training procedure \cite{barrett2021implicit,smith2021on,ghosh2023implicit,cattaneo2023implicit_bias_adam,damian2021label,dherin2022neural,Blanc2019ImplicitRF,ma2021linear_stability_of_sgd}. Related to the learning curve slope is the optima sharpness
$$
\operatorname{sharpness}(\theta_*) = \frac 1n \operatorname{trace} H(\theta_*)
$$
where $\theta_*$ is a global minima toward which the learning dynamics converges to $\theta_t \rightarrow \theta_*$ and $H$ is the Hessian of the loss. Since the sharpness of an optima corresponds to the mean curvature of the loss surface at that point, flat minima (i.e., minima with low curvature in all directions) can be reached only through learning paths with shallower slopes. 
\paragraph{Neural collapse in embedding space.}
Neural collapse \cite{galanti2022on,papyan2020prevalence,han2022neural} refers to a phenomenon observed in deep learning where the embedding network (i.e., the subnetwork $f$ before the logit layer $g$ in a neural network 
$h(x)=\operatorname{softmax}(g(f(x))$)
collapses the input points around their respective class means. Furthermore, these class means form a
simplex creating an equiangular tight frame (ETF) centered around the global mean with roughly equal distance between its vertices. Intuitively, such a phenomenon is beneficial to generalization since the embeddings of different classes are optimally separated, making the job of the classifier head $\operatorname{softmax}(g(z))$ easier. To measure neural collapse, the authors in \cite{galanti2022on} introduce the \emph{class-distance normalized variance} ($\CDNV$) as
\begin{equation}
    V_f(Q_i, Q_j) := 
        \frac{\operatorname{Var}_f(Q_i) + \operatorname{Var}_f(Q_j)}{
            2\|\mu_f(Q_i) - \mu_f(Q_j)\|^2
        },
\end{equation}
where $Q_r= q_r(x)dx$ is the input distribution for classes $r\in\{i,j\}$ and where
\begin{equation}
\mu_f(Q_r) = \mathbb E_{x\sim Q_r}[f(x)] 
\quad \textrm{and} \quad
\operatorname{Var}_f(Q_r) = \int \mathbb \|\mu_f(Q_r) - f(x)\|^2 q_r(x) dx.
\end{equation}
Neural collapse between two classes happens when their class variance decreases while the distance between their class means increases, causing lower values of their CDNV. Following \cite{galanti2022on}, given a well-balanced input distribution $Q = \frac 1k (Q_1 + \cdots + Q_k)$ with $k$ classes ($Q_i$ being the input distribution for class $i$), \emph{neural collapse} (NC) during training is characterized by the following limit as the number of training steps $t$ goes to infinity:
\begin{equation}\label{equation:neural_collapse}
    \lim_{t\to\infty} \NC(f_t, Q) = 0 \quad
    \textrm{where} \quad
    \operatorname{NC}(f, Q) := \operatorname{Avg}_{i\neq j}
        \left(
            V_f(Q_i, Q_j)
        \right),
\end{equation}
and where $f_t$ is the learned embedding network at step $t$. Thus, more neural collapse (i.e., more clustering around better separated class-means) happens with lower $\NC$ values. Proposition 5 from  \cite{galanti2022on} proves that $\operatorname{NC}(f, Q)$ bounds a generalization gap and lower values of $\operatorname{NC}(Q,f)$ are correlated with better test performance of the neural network $f$, and \cite{liu2023inducing} shows that explicitly regularizing for neural collapse is beneficial. 
\paragraph{Geometric complexity in function space:}
The \emph{geometric complexity} (GC) of a function $f:\mathbb R^d\rightarrow \mathbb R^k$ w.r.t.~a data distribution $Q = q(x)dx$ on $\mathbb R^d$ is defined as the expectation of the gradient Frobenius square-norm w.r.t.~to the data distribution \cite{dherin2022neural} given by
$$
    \GC(f, Q) := \mathbb E_{x\sim Q}\left[\|\nabla_x f(x)\|_F^2\right].
$$
Intuitively, the GC measures the function complexity or variability and is closely related to the Dirichlet energy in geometric analysis \cite{federer2014geometric}. Provided mild Lipschitz smoothness assumptions \cite[Proposition 3.4]{munn2023neural}, the GC can be estimated accurately on batches of training data $D$ by its empirical counterpart
\begin{equation}\label{equation:empirical_GC}
    \widehat{\operatorname{GC}}(f,D) = \frac{1}{|D|} \sum_{x\in D} \|\nabla_x f(x)\|_F^2, \quad \text{where }D \text{ is an i.i.d.~sample drawn from }Q.
\end{equation}
Previous work has explored the relationship  between the GC and model generalization. When measured on the full neural network, lower GC values correlate experimentally with higher test accuracy \cite{novak2018sensitivity} and it has also been used as a beneficial explicit regularizer \cite{Sokolic2017RobustLM,hoffman2020robust}. In \cite{dherin2021geometric, dherin2022neural}, the authors ignore the softmax activation and study the GC measured with respect to the logit network, exploring its connection with various implicit and explicit regularizers and training heuristics. The logit GC has also be used to prove a margin based multi-class generalization bound \cite{munn2023neural}, assuming that the input distribution $Q$ satisfies a mild assumption known as  the Poincar\'e inequality \cite{bakry2008simple}.

In fact, both this work and the proof of the generalization bound rely on the Poincar\'e inequality, which we argue is indeed a natural and mild assumption for the types of data distributions typically encountered in machine learning; see Appendix \ref{appendix:poincare} for further discussion. For completeness, we recall it here, cf. \cite{evans2022partial}. A distribution $Q$ satisfies a Poincar\'e inequality if, for all differentiable functions $v$ defined on $\supp(Q)$, there exists a constant $c$ such that 
\begin{equation}\label{equation:poincare}
\operatorname{Var}_v(Q) \leq c\mathbb E_{x\sim Q}\left[\|\nabla_x v\|_F^2\right] = c\operatorname{GC}(v, Q).
\end{equation}

Note that the same assumption is used in \cite{bubeck2023universal} to prove the law of robustness for overparameterized neural networks via isoperimetric inequalities. In this paper, we peel back yet another layer of the network and consider the embedding geometric complexity measured on a feature map (Section \ref{section:problem_formulation}).
\paragraph{Relationship between flatness, neural collapse, and geometric complexity.}
The learning-path flatness as measured by its slope influences the geometric complexity of the learned solutions \cite[Theorem 5.1]{dherin2021geometric}. Namely, for dense layers a regularizing pressure on the slope of the learning path in parameter space transfers to a regularizing pressure on the geometric complexity of the learned solution in function space. A similar result also been shown for special attention layers as well \cite{ren2023sparse}. 

In this paper, we will see that imposing a regularization pressure on the embedding geometric complexity encourages more neural collapse of the model. This results in the following chain of influences from regularization pressure:

\begin{tabular}{>{\centering\arraybackslash}p{3cm} >{\centering\arraybackslash}p{0.15cm} >{\centering\arraybackslash}p{5cm} >{\centering\arraybackslash}p{0.15cm} >{\centering\arraybackslash}p{4cm}}
  \textrm{learning path flatness} & $\rightsquigarrow$ & \textrm{function geometric complexity} & $\rightsquigarrow$ & \textrm{embedding neural collapse} \\
  \textit{\textbf{\small Regularizing Pressure}} & & \textit{\textbf{\small Regularizing Pressure}} & & \textit{\textbf{\small Lower CDNV}} \\
\end{tabular}

\section{Problem Formulation}\label{section:problem_formulation}
We are interested in understanding the relationship between the geometric complexity (GC) and neural collapse (NC) in the transfer learning setting and  explore this relationship in two stages. In the first stage, we examine the general impact of the GC on NC during model training/pre-training. Secondly, we examine how this relationship provides insight into the mechanisms behind transfer learning and the advantageous implicit biases of the pre-training stage. In short, lower GC during pre-training on source classes leads to lower NC for target classes and improved transfer accuracy.

For the first stage of our inquiry (Section \ref{section:stage_1}), we are concerned with a $k$-classification task. Let $D := \{(x_i, y_i)\}_{i=1}^m$ be a dataset drawn from a distribution $Q$ with $x_i \in \mathbb{R}^d$ and $y_i \in \{e_j ~|~j = 1, \dots, k\}$ where $e_j \in \mathbb{R}^k$ are the canonical basis vectors representing a one-hot encoding of the labels. We aim to learn this task using a neural network denoted by a function $h_{\theta}: \mathbb{R}^d \to \mathbb{R}^k$ parameterized by $\theta$ (though to simplify notation we subsequently drop this dependence on $\theta$). 

We can write $h_{\theta}(x) = \operatorname{softmax}(g(f(x))$ where $g: \mathbb{R}^p \to \mathbb{R}^k$ is a classifier head mapping from the feature space $\mathbb{R}^p$ to the prediction layer $\mathbb{R}^k$ and $f:\mathbb{R}^d \to \mathbb{R}^p$ is a feature mapping from the input space to the penultimate embedding layer of the network before the classification head. The standard way of training $h_{\theta}$ is to find, via some stochastic optimization technique, an optimal parameter configuration $\theta_*$ such that $\theta_* = \argmin_{\theta} \sum_{i=1}^m \ell(h_{\theta}(x_i), y_i)$ for a given loss function $\ell: \mathbb{R}^k \times \mathbb{R}^k \to [0, \infty)$. 

To analyze the role of neural collapse and geometric complexity in this setting, we focus our attention on the feature map $f:\mathbb{R}^d \to \mathbb{R}^p$. The NC as described in Section \ref{section:background} is defined using this sub-network map $f$ of $h$ and thus we also measure the geometric complexity with respect to this sub-network feature mapping as well. Throughout this paper, we use the term \emph{embedding GC} to refer to the GC of such a feature map $f$ and unless otherwise specified, when we refer to the model GC we mean the embedding GC, not the GC of the logit network as in \cite{dherin2021geometric, dherin2022neural}.

Our second stage of inquiry (Section \ref{section:stage_2}) is focused on the role of geometric complexity in transfer learning. For this setting, we assume there is some $l$-class classification target task we would like to solve and corresponding target distribution $\mathcal{T}$. We aim to learn a classifier $h': \mathbb{R}^d \to \mathbb{R}^l$ given some training data $D' := \{(x_i', y_i')\}_{i=1}^{m'}$ which is potentially very limited in size. In order to find a good solution, we can leverage a pre-trained feature mapping that has been trained on some other source task and source distribution $\mathcal{S}$ where more data is available. For example, by leveraging a feature map $f$ pre-trained as in the above setup, the target task classifier $h'$ can instead be trained on the outputs of our feature map; i.e., $\{f(x_i'), y_i')\}_{i=1}^{m'}$. In this way, we can write $h' = \operatorname{softmax}(g' \circ f)$ where the parameters of $f$ are fixed and we only need to learn the parameters of $g': \mathbb{R}^p \to \mathbb{R}^l$. Ideally, provided $f$ is a rich enough feature map trained on a general enough source distribution dataset, then the task for learning $g'$ is much simpler.

\section{Geometric complexity and neural collapse}\label{section:stage_1}

In this section, we explore the general relationship between the embedding GC and neural collapse showing that the geometric complexity can be used to bound neural collapse.  Next, we see how the geometric complexity can be efficiently and accurately estimated, both as an approximation of the theoretical $\GC$ as well as through a number of sampling techniques for the empirical $\GC$ via batch sampling, feature sampling and label sampling. Lastly, following \cite{galanti2022on}, we derive a new generalization bound for neural networks expressed in terms of the embedding geometric complexity; cf. \cite{munn2023neural}.

\subsection{Geometric complexity controls neural collapse}\label{section:gc_controls_nc}
Previously, it has been shown \cite{dherin2021geometric} that the $\GC$ can be controlled implicitly through choice of learning rate and batch size, as well as through standard explicit regularization. Practically, this means that these same beneficial tuning strategies which ensure that models have low $\GC$ should also work to keep the $\NC$ low as well. The following proposition (which we prove in Appendix \ref{section:proof_of_NC_GC_bound}) bounds the neural collapse by the geometric complexity provided that the input distribution satisfies a Poincar\'e inequality also assumed in \cite{munn2023neural} and \cite{bubeck2023universal}. 

\begin{proposition}\label{thm:NC_GC_bound}
Suppose that we have a balanced multi-class input-distribution $Q$ with $k$ classes satisfying the Poincar\'e inequality in \eqref{equation:poincare} for some constant $c$, then the geometric complexity of a network embedding $f$ bounds its neural collapse as measured by \eqref{equation:neural_collapse}; namely, we have the following bound
\begin{equation}\label{eqn:NC bounds GC}
\NC(f,Q) \leq \frac{c \cdot \GC(f,Q)}{k-1}\left(\sum_{i\neq j} \frac 1{d_{ij}^2}\right),
\end{equation}
where $k$ is the number of classes, and $d_{ij}$ is the distance between the mean of class $i$ and class $j$.
\end{proposition}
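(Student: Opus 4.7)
The plan is to chain together three facts: the definition of $\NC(f,Q)$, the law of total variance for the balanced mixture $Q = \frac 1k \sum_i Q_i$, and the Poincar\'e inequality \eqref{equation:poincare}. Concretely, I will start from
$$
\NC(f,Q) = \frac{1}{k(k-1)}\sum_{i\neq j} \frac{\Var_f(Q_i)+\Var_f(Q_j)}{d_{ij}^2},
$$
and look for a uniform upper bound on the numerators $\Var_f(Q_i)+\Var_f(Q_j)$ that is independent of the pair $(i,j)$, so that the sum $\sum_{i\neq j}1/d_{ij}^2$ factors out cleanly.

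The key step is the crude but effective estimate $\Var_f(Q_i)+\Var_f(Q_j)\leq \sum_{l=1}^k \Var_f(Q_l)$. I would then control the total within-class variance via the law of total variance: since $Q=\frac 1k \sum_l Q_l$ is balanced,
$$
\Var_f(Q) \;=\; \frac 1k \sum_{l=1}^k \Var_f(Q_l) \;+\; \frac 1k \sum_{l=1}^k \bigl\|\mu_f(Q_l)-\mu_f(Q)\bigr\|^2 \;\geq\; \frac 1k \sum_{l=1}^k \Var_f(Q_l),
$$
so that $\sum_l \Var_f(Q_l) \leq k\,\Var_f(Q)$. A brief remark is needed here because $f$ is vector-valued: I would interpret $\Var_f$ componentwise, $\Var_f(Q)=\sum_\alpha \Var_{f^\alpha}(Q)$, and apply the scalar Poincar\'e inequality to each coordinate $f^\alpha$ in turn; summing over $\alpha$ reassembles the Frobenius norm of the Jacobian and yields $\Var_f(Q)\leq c\,\GC(f,Q)$.

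Combining these pieces gives $\Var_f(Q_i)+\Var_f(Q_j)\leq kc\,\GC(f,Q)$ uniformly in $(i,j)$. Substituting into the expression for $\NC(f,Q)$ and factoring the common constant out of the sum yields
$$
\NC(f,Q) \;\leq\; \frac{1}{k(k-1)}\cdot kc\,\GC(f,Q)\sum_{i\neq j}\frac{1}{d_{ij}^2} \;=\; \frac{c\,\GC(f,Q)}{k-1}\sum_{i\neq j}\frac{1}{d_{ij}^2},
$$
which is exactly \eqref{eqn:NC bounds GC}.

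\paragraph{Where the difficulty lies.} The calculation is short, but two conceptual points require care. First, the Poincar\'e hypothesis is stated for the \emph{mixture} $Q$, not for the individual class-conditional distributions $Q_i$; this is precisely why I cannot bound each $\Var_f(Q_i)$ directly by a class-level geometric complexity, and instead must first aggregate via the law of total variance before invoking Poincar\'e once on $Q$. Second, the apparently wasteful step $\Var_f(Q_i)+\Var_f(Q_j)\leq \sum_l \Var_f(Q_l)$ is what makes the combinatorics produce the sharp factor $1/(k-1)$: it decouples the numerator from the pair index, so the $d_{ij}^{-2}$ factors can be summed freely. Any attempt to keep the per-pair dependence (e.g.\ applying Poincar\'e pointwise to each $Q_i$) leads instead to a weighted sum $\sum_i \Var_f(Q_i)\sum_{j\neq i} d_{ij}^{-2}$ that does not simplify, so identifying this relaxation as the right one is the main subtlety.
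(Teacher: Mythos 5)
Your proof is correct and reaches the stated bound, but it takes a genuinely different route from the paper's. The paper applies the Poincar\'e inequality separately to each class-conditional distribution, $\Var_f(Q_i)\le c\,\GC(f,Q_i)$, and then uses the additivity of $\GC$ over the balanced mixture to get $\GC(f,Q_i)\le k\,\GC(f,Q)$. You instead aggregate first: the crude bound $\Var_f(Q_i)+\Var_f(Q_j)\le\sum_{l}\Var_f(Q_l)$ together with the law of total variance gives $\sum_{l}\Var_f(Q_l)\le k\,\Var_f(Q)$, and only then do you invoke Poincar\'e, once, on the mixture $Q$. The two routes produce the same constant, but yours is more faithful to the hypothesis as literally stated: the proposition assumes Poincar\'e for $Q$ only, and a mixture satisfying a Poincar\'e inequality does not in general pass that property (let alone with the same constant $c$) to its components, so the paper's per-class application silently strengthens the assumption, whereas your aggregation via total variance avoids this entirely. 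What the paper's route buys in exchange is that it keeps the per-class quantities $\GC(f,Q_i)$ alive until the final step, so it could in principle yield a per-pair refinement in terms of $\GC(f,Q_i)+\GC(f,Q_j)$; your relaxation to the total variance discards that. One bookkeeping note: you use the main-text definition of the CDNV (no factor of $2$ in the denominator), which is what \eqref{equation:neural_collapse} references, while the appendix restates it with an extra $\tfrac 12$ as in \cite{galanti2022on}; your constants are consistent with the former, and under the latter your argument would simply give a bound better by a factor of $2$.
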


We call the RHS of the bound in Eq.~\eqref{eqn:NC bounds GC}, excepting the Poincar\'e constant $c$, the {\bf geometric collapse}: 
\begin{equation}\label{equation:geometric_collapse}
\frac{\GC(f,Q)}{k-1}\left(\sum_{i\neq j} \frac 1{d_{ij}^2}\right),
\end{equation}
 where $k$ is the number of classes, and $d_{ij} = \|\mu_f(Q_i) - \mu_f(Q_j)\|$ denotes the Euclidean distance between the mean $\mu_f(Q_i)$ and $\mu_f(Q_j)$ of class $i$ and class $j$ (resp.) in embedding space.

This quantity can be seen as another proxy metric measuring neural collapse. It is made of two main parts decoupling the variances and mean differences present in the neural collapse framework. The variances are consolidated into a single factor through the GC, while the mean differences are averaged across classes in a separate factor. This separation allows the overall intra-class variability to be influenced through the GC term, ensuring sufficient between-class separation through the mechanisms identified in neural collapse.

This bound provides a powerful method to control the overall within-class variability via the L2-norm of the model gradient.
As a result, the geometric collapse provides a simplified and more refined approach to managing class separability and variance in deep learning models.

We verify the relationship posed in Eq.~\eqref{eqn:NC bounds GC} through experiments on VGG-13 trained on CIFAR-10 (see Figure \ref{figure:cifar_vgg13_stage1_sweep}). Through both implicit and explicit regularizers the pressure on the embedding $\GC$ translates to a direct pressure on the neural collapse of the model via the geometric collapse. As already observed in \cite{dherin2022neural}, lower levels of GC coincide with higher test accuracy as show in Figure \ref{figure:learning_curves} in Appendix \ref{appendix:additional_experiments} containing the learning curves of that experiment.

In Appendix \ref{appendix:additional_experiments} we see the same relationship holds across various datasets; e.g., MNIST, Fashion-MNIST, CIFAR-100, and architectures; e.g., ResNet-18, ResNet-50.  To avoid possible confounding factors that could arise through batch size and learning rate manipulations, we also directly regularize with the geometric complexity in Appendix \ref{appendix:gc_regularization}; the same relations hold in this case too.

\begin{figure}[h] 
\centering
\includegraphics[width=\linewidth]{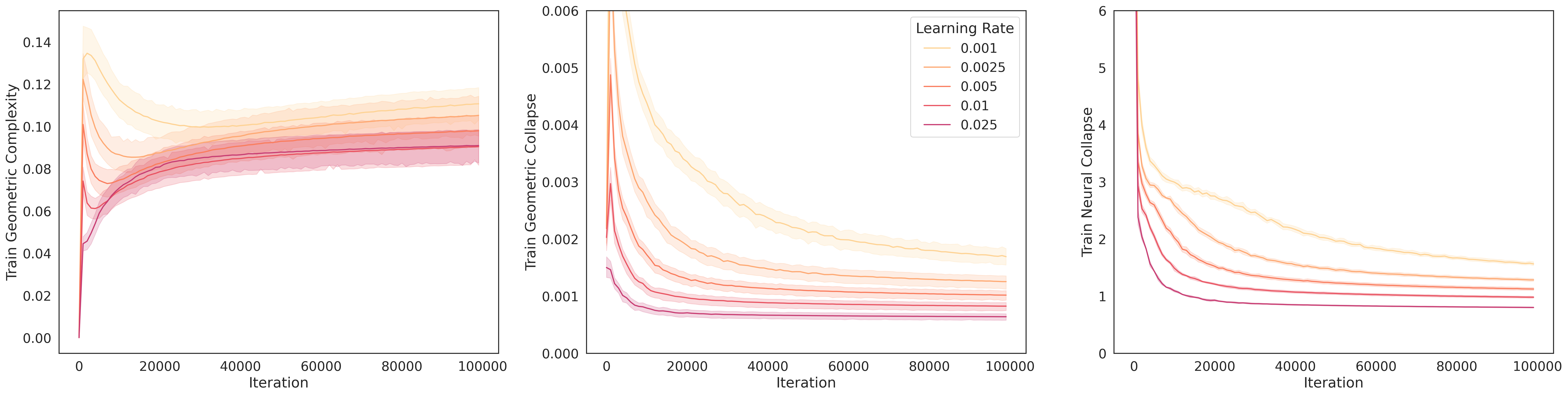}
\includegraphics[width=\linewidth]{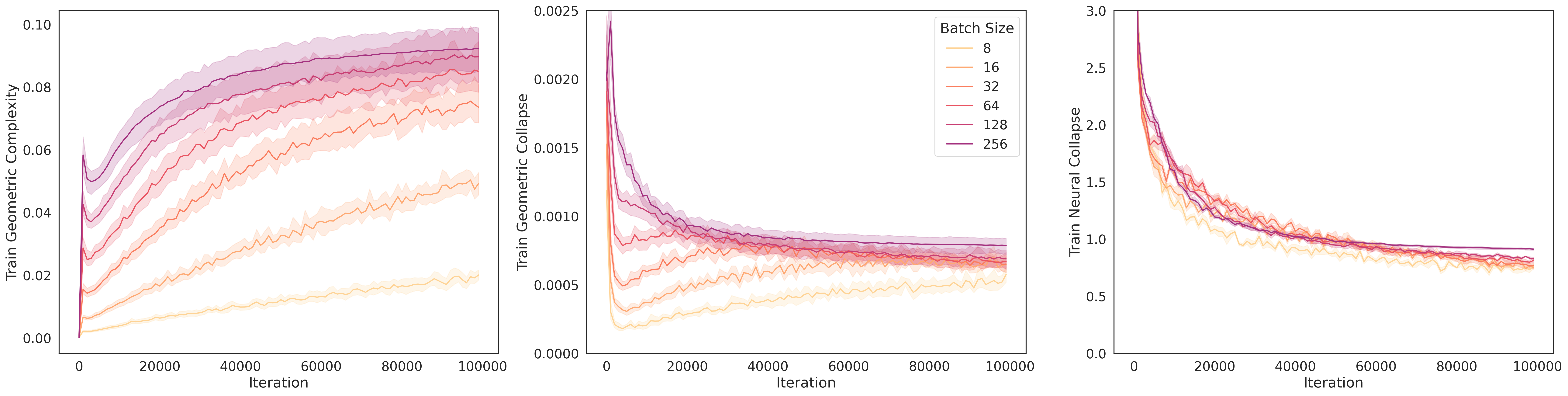}
\includegraphics[width=\linewidth]{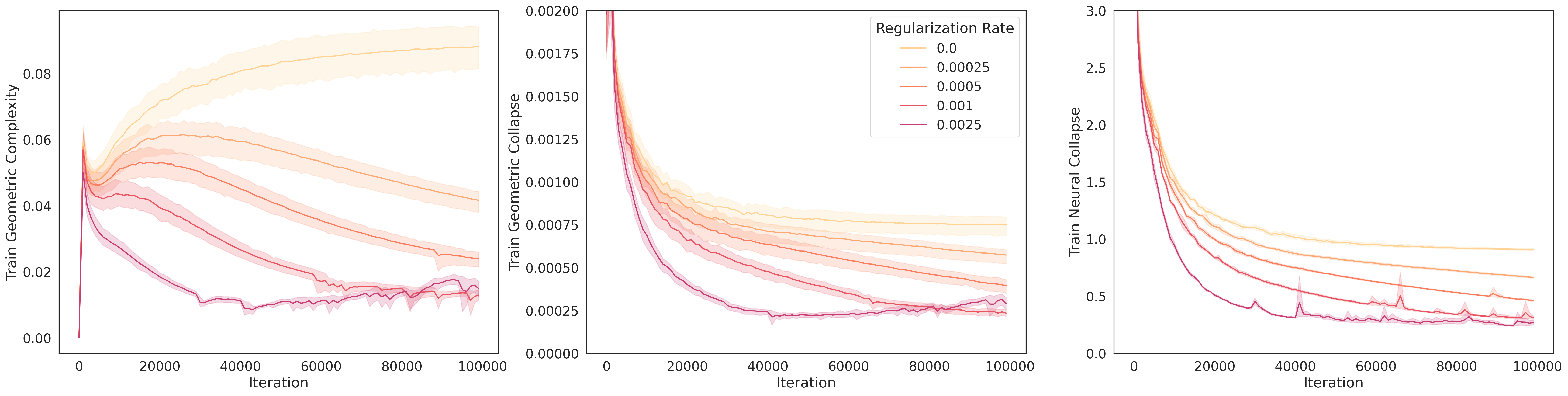}
\caption{Controlling the neural collapse via the model geometric complexity for VGG-13 trained on CIFAR-10. Lower embedding $\GC$ produces lower geometric collapse (Eq.~\ref{equation:geometric_collapse}) and more neural collapse (i.e., lower $\NC$) for  
\textbf{Top row:} increased learning rates, \textbf{Middle row:} decreased batch sizes, and 
\textbf{Bottom row:} increased L2 regularization.
}
\label{figure:cifar_vgg13_stage1_sweep}
\end{figure}

\subsection{The geometric complexity is a reliable and robust measure}\label{section:gc_robustness}
One of the key advantages of the GC as a complexity measure is its computational efficiency. 

Under mild regularity constraints on the model function the theoretical geometric complexity of a map can be efficiently estimated by its empirical counterpart, as stated in the proposition below, already proven in \cite{munn2023neural} for logit networks (see Appendix \ref{appendix:proof_estimate_GC} for a proof). Additionally and crucially, we verify that the empirical $\GC$ is a robust and reliable measure with respect to that data sample.

\begin{proposition}\label{prop:batch_GC_theoretical_GC}
Let $f:\mathbb R^d\rightarrow \mathbb R^p$ be a map with Lipschitz constant $L$ and let $D_X$ be a sample of $m$ elements drawn independently from an input distribution $Q$. Then, for any $\delta > 0$, we have with probability $1- \delta/2$ the following bound

\begin{equation}
    \GC(f,Q) \leq \widehat{\GC}(f,D_X) + L\sqrt{\frac{\log\frac 2\delta}{2m}}.
\end{equation}
\end{proposition}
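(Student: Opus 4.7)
The plan is to apply a one-sided Hoeffding-type concentration inequality to the i.i.d.\ real-valued random variables $Z_i := \|\nabla_{x_i} f(x_i)\|^2$, where $x_1,\ldots,x_m$ are the i.i.d.\ samples comprising $D_X$. Under this identification, $\mathbb{E}[Z_i] = \GC(f,Q)$ and $\tfrac{1}{m}\sum_{i=1}^m Z_i = \widehat{\GC}(f,D_X)$, so the statement reduces to a one-sided concentration bound for an empirical mean of bounded i.i.d.\ variables.

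First I would use the Lipschitz assumption to argue that each $Z_i$ takes values in a bounded interval. Since $f$ is $L$-Lipschitz, Rademacher's theorem gives that $\nabla_x f$ exists almost everywhere in $\supp(Q)$ with $\|\nabla_x f(x)\| \leq L$, so $Z_i \in [0, L^2]$ almost surely; this supplies the bounded-range hypothesis required for Hoeffding's inequality. Second, I would invoke the one-sided Hoeffding bound, which yields
\begin{equation*}
\Pr\!\left(\GC(f,Q) - \widehat{\GC}(f,D_X) \geq t\right) \leq \exp\!\left(-\tfrac{2mt^2}{C^2}\right),
\end{equation*}
where $C$ is the effective range of $Z_i$. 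Setting the right-hand side equal to $\delta/2$ and solving for $t$ produces the stated tail of the form $(\text{constant}) \cdot \sqrt{\log(2/\delta)/(2m)}$, recovering the claimed inequality once $C$ is identified with the Lipschitz envelope appearing in the proposition.

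I expect the only subtle point to be the passage from the hypothesis ``$f$ is $L$-Lipschitz'' to a pointwise almost-sure bound on $\|\nabla_x f(x)\|^2$ that feeds into Hoeffding with the correct prefactor; this is where the precise form of the ``mild Lipschitz smoothness'' assumption (as used in \cite{munn2023neural}) enters, and where the exact scaling of the constant in front of $\sqrt{\log(2/\delta)/(2m)}$ is pinned down. Everything else is routine bookkeeping: the i.i.d.\ structure of $D_X$ is given, the random variables $Z_i$ are real-valued and bounded, and the one-sided nature of the claim matches the ``$1-\delta/2$'' probability in the statement, so no union bound is needed.
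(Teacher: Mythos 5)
Your proof is correct and is essentially the paper's argument: the paper computes $\mathbb{E}_{D\sim Q^m}[\widehat{\GC}(f,D)]=\GC(f,Q)$ and then applies McDiarmid's inequality with bounded-differences constant $L^2/m$, which for an empirical mean of i.i.d.\ summands bounded in $[0,L^2]$ is exactly the one-sided Hoeffding bound you invoke. The one point you flag as needing care --- pinning down the constant --- is real but cuts against the paper too: with range $C=L^2$ a careful application gives $L^2\sqrt{\log(2/\delta)/(2m)}$ rather than $L\sqrt{\log(2/\delta)/(2m)}$, and the paper's own proof makes the same slip by writing $\exp(-2m\epsilon^2/L^2)$ where the bounded-differences computation yields $\exp(-2m\epsilon^2/L^4)$.
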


For a function $f$ and data sample $D_X$ as in Proposition \ref{prop:batch_GC_theoretical_GC}, to measure $\widehat{\GC}(f, D_X)$ requires computing the Frobenius norm of a potentially very large Jacobian matrix, particularly when $p$ represents the embedding dimension of a feature map. Note that, 
\begin{equation}
    \widehat{\GC}(f, D_X) = \frac 1m \sum_{i=1}^m\sum_{j=1}^p \sum_{s=1}^d 
    \left(
        \frac {\partial f^j(x^{(i)})} {\partial x_s}
    \right)^2.
\end{equation}

Although the computational complexity of the GC is impervious to increasing complexity of the network architecture, e.g., in terms of parameter count or number of layers, one may run into computation bottlenecks as the sample size $m$ increases or when increasing the dimensionality $d$ (resp. $p$) of the inputs (resp. outputs). In these scenarios, it is necessary to find efficient means to accurately approximate or sample the Jacobian; cf. \cite{wang2023efficient}. 

 We explore the robustness of the GC when measured via samplings along these three axes; i.e., decreasing the number of examples $m$ in the batch, randomly sampling the full Jacobian matrix which has order $d \times p$, and randomly sampling the number of model outputs $p$.  As shown Figure \ref{fig:gc_robustness}, we find that the value of the sampled $\widehat{\GC}$ remains stable and consistent to its true value through these fairly simple and naive sampling tricks.

\begin{figure}[ht]
\centering
\includegraphics[width=.3\textwidth]{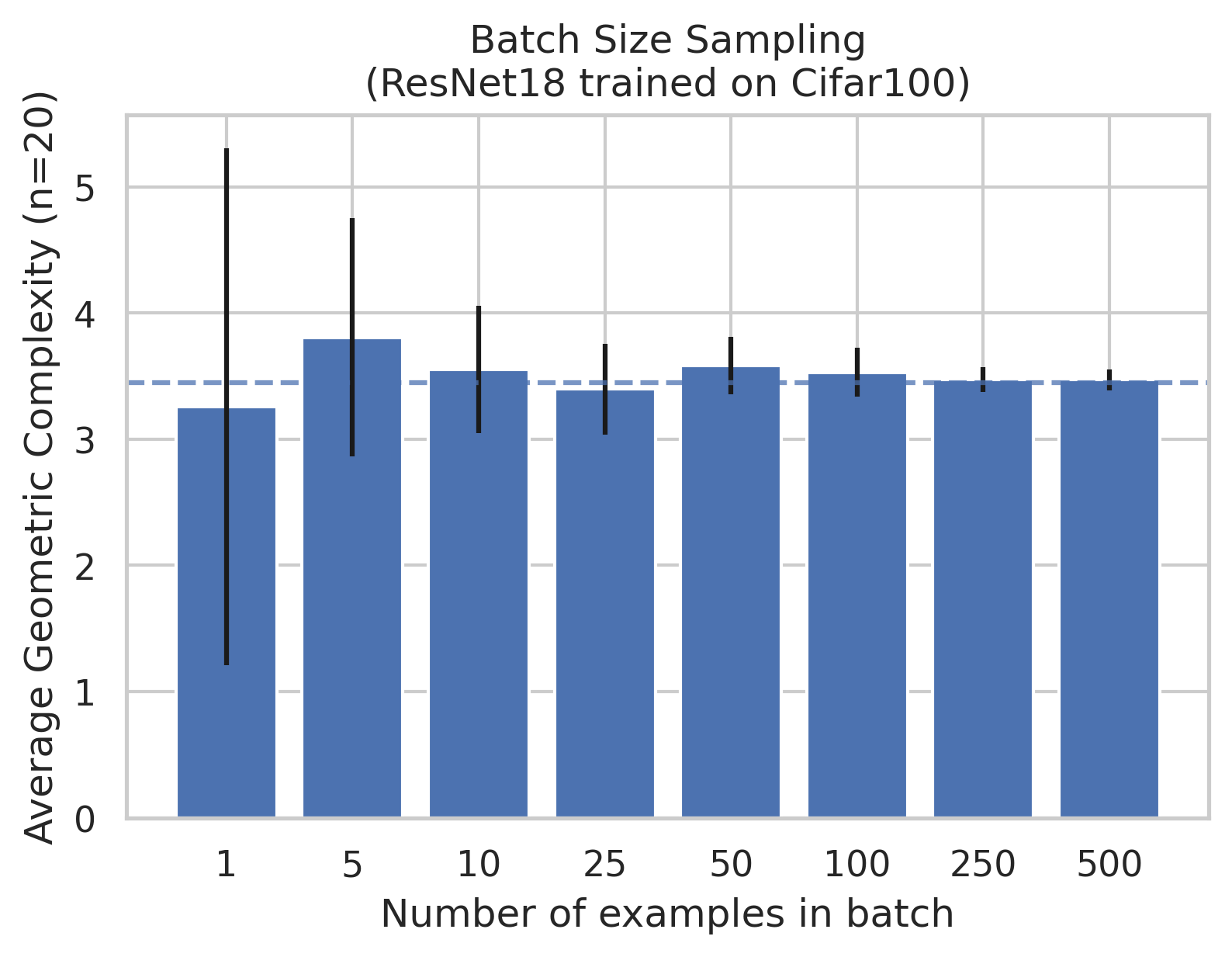}\hfill
\includegraphics[width=.3\textwidth]{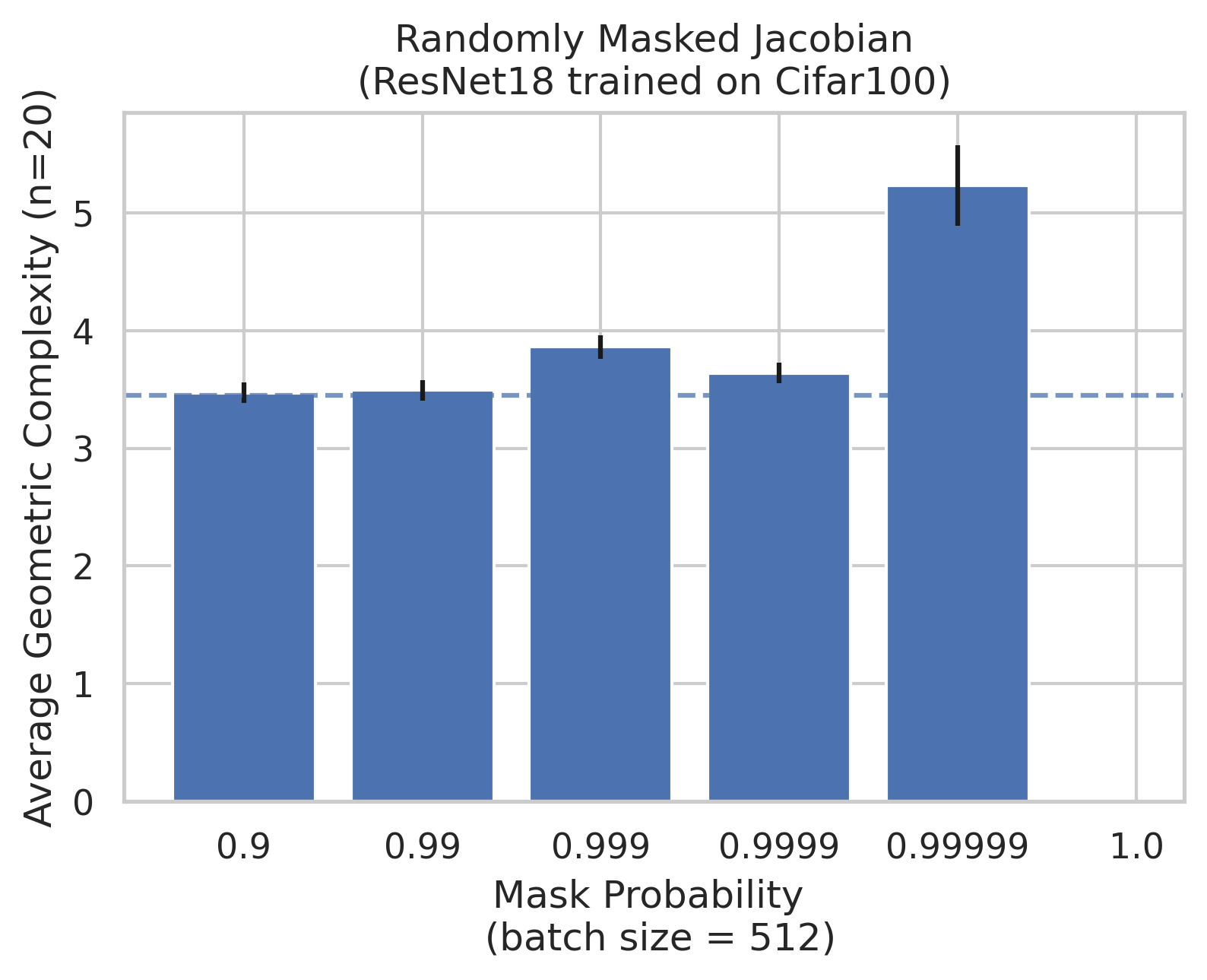}\hfill
\includegraphics[width=.3\textwidth]{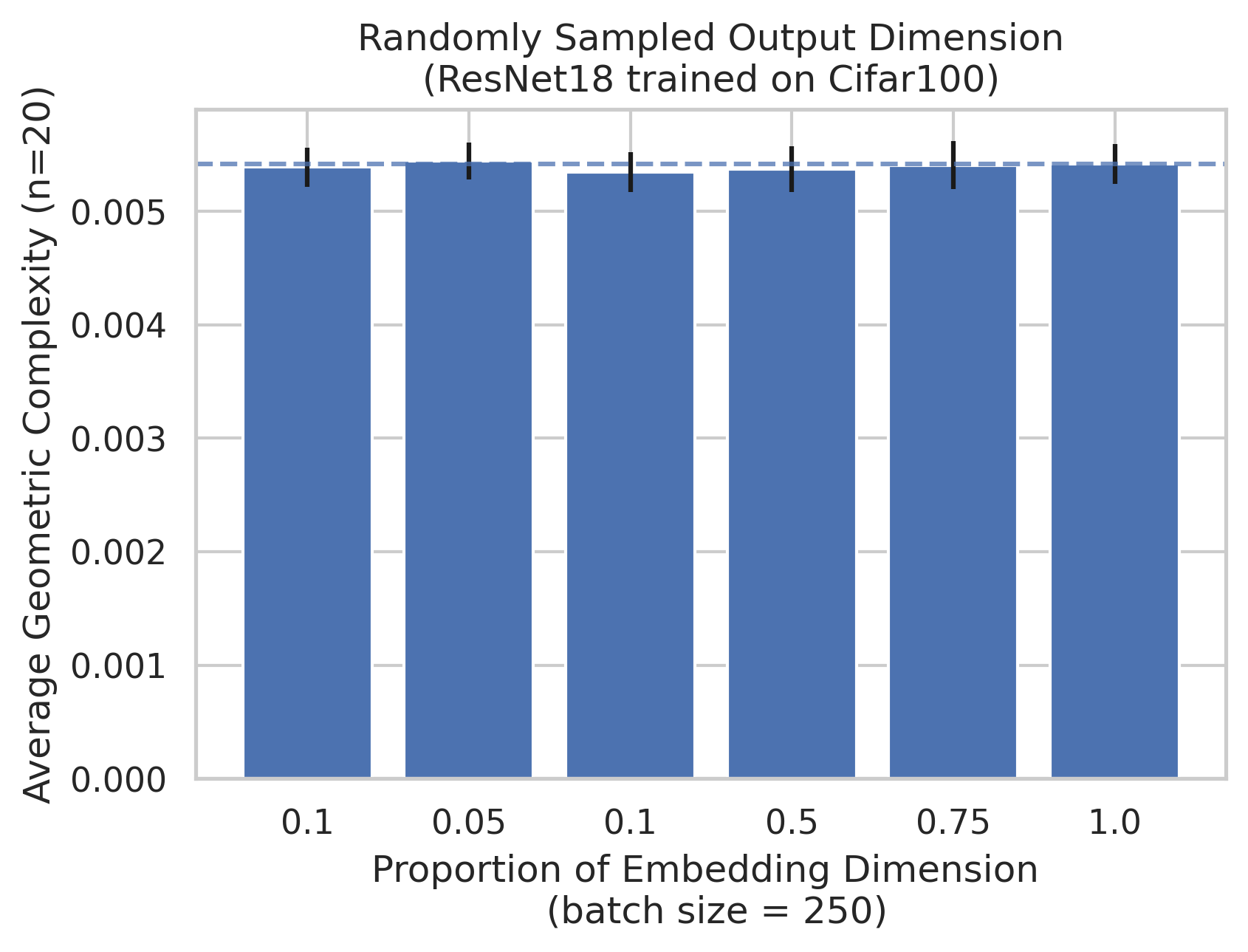}

\caption{The $\GC$ computation is robust and consistent to sampling via \textbf{Left:} number of examples in the batch, \textbf{Middle:} number of elements in the Jacobian, or \textbf{Right:} by sampling the embedding dimension of the model. Here the $\GC$ and subnet $\GC$ have been computed over 20 trials, plotting the mean and standard deviations for a ResNet-18 model that has been trained to convergence on CIFAR-100. The true value of the $\GC$ for each setting is indicated by dotted line.}
\label{fig:gc_robustness}
\end{figure}

\subsection{A new generalization bound with GC through NC}\label{section:gc_generalization_bound}
In \cite{munn2023neural}, the authors derive a margin-based multi-class generalization bound for neural networks which depends on the geometric complexity $\GC(h, Q)$ of the full logit network. In this section, we extend this result. With inspiration from \cite[Proposition 5]{galanti2022on}, we show that the geometric complexity measured on the sub-network feature map $\GC(f, Q)$ bounds the classification error of the nearest-mean classifier defined by the feature map $f$.

 As in \cite{galanti2022on}, given a balanced $k$-class classification problem, let $S = \bigcup_{c\in [k]}S_c$ denote all class samples and define the nearest-mean classifier by $h_{f, S}(x) := \argmin_{c \in [k]} \|f(x) - \mu_f(S_c)\|$ given by the feature map $f$ where $\mu_f(S_c)$ denotes the sample mean of the set $S_c$ under the map $f$. Define the generalization error
\begin{equation*}
    \operatorname{Error} := \mathbb E_{(x,y)\sim P}\left[
        \mathbb I[h_{f, S}(x) \neq y]
    \right]
\end{equation*}
where $\mathbb I[h_{f, S}(x) \neq y]$ is the indicator function of the error set $[h_{f,S}(x) \neq y]$ and $P$ is the full data distribution from which samples $S$ are drawn. 

\begin{proposition}\label{prop:generalization_bound}
Suppose that we have a balanced sample $S$ from a $k$-class input-distribution $(x,y)\sim P$ with $m_c$ samples per class. Assume further that the induced input distribution $x\sim Q$ satisfies a Poincar\'e inequality as in \eqref{equation:poincare} for some constant $c$. Then, for any $\delta > 0$, with probability $1-\delta/2$ we have the following bound for the generalization error
\begin{equation}\label{equation:generalization_bound}
\mathbb{E}\left[\operatorname{Error}\right] \leq 16 c \left(\frac 1p + \frac{1}{m_c}\right)
    \left(
        \widehat{\GC}(f,S) + L\sqrt{\frac{\log\frac 2\delta}{2m_ck}}
    \right)
    \left(\sum_{i\neq j} \frac 1{d_{ij}^2}\right),
\end{equation}
where $p$ is the embedding dimension of the feature map $f$ and $d_{ij} = \|\mu_f(S_i) - \mu_f(S_j)\|$. Note, the outer expectation on the left hand side is taken over the samples $S$ used to produce the classifier $h_{f, S}$.

\end{proposition}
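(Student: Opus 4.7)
The plan is to assemble the bound by composing three ingredients: (i) the nearest-mean classifier error bound of Galanti et al.\ (their Proposition 5), which already controls the generalization error in terms of a pairwise CDNV expression; (ii) our Proposition~\ref{thm:NC_GC_bound}, which converts that CDNV/neural-collapse quantity into a geometric-complexity quantity via the Poincar\'e inequality; and (iii) Proposition~\ref{prop:batch_GC_theoretical_GC}, which lets us pass from the theoretical $\GC(f,Q)$ to its empirical counterpart $\widehat{\GC}(f,S)$ at the cost of the $L\sqrt{\log(1/\delta)/(2m_c k)}$ concentration term. The final high-probability statement is obtained by a union bound on the single failure event coming from (iii).

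Concretely, I would first invoke the Galanti et al.\ nearest-mean generalization result to get, unconditionally,
\begin{equation*}
\mathbb{E}[\operatorname{Error}] \;\leq\; 16\Bigl(\tfrac{1}{p} + \tfrac{1}{m_c}\Bigr) \cdot \operatorname{Avg}_{i\neq j} V_f(Q_i, Q_j) \;=\; 16\Bigl(\tfrac{1}{p} + \tfrac{1}{m_c}\Bigr)\,\NC(f,Q).
\end{equation*}
Next I would apply Proposition~\ref{thm:NC_GC_bound} to replace $\NC(f,Q)$ by $\frac{c\,\GC(f,Q)}{k-1}\sum_{i\neq j}\frac{1}{d_{ij}^2}$; the $(k-1)$ is absorbed when I unfold the average $\operatorname{Avg}_{i\neq j}$ into a sum weighted by $\frac{1}{k(k-1)}$ on the Galanti side, so the resulting bound has the clean prefactor $16c(1/p + 1/m_c)$ multiplying $\GC(f,Q)\cdot\sum_{i\neq j} 1/d_{ij}^2$.

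Finally, since $S$ is a balanced sample of total size $m = m_c k$, Proposition~\ref{prop:batch_GC_theoretical_GC} gives, with probability at least $1-\delta/2$,
\begin{equation*}
\GC(f,Q) \;\leq\; \widehat{\GC}(f,S) + L\sqrt{\frac{\log(2/\delta)}{2 m_c k}},
\end{equation*}
and substituting this inside the previous display yields the claimed bound, with the $\log(2/\delta)$ vs.\ $\log(1/\delta)$ discrepancy absorbed into constants (or by a trivial rescaling of $\delta$). The probabilistic qualifier $1-\delta/2$ is inherited directly from the single application of Proposition~\ref{prop:batch_GC_theoretical_GC}; no additional union bound is needed since steps (i) and (ii) are deterministic.

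The main obstacle, and the only non-routine step, is verifying that the Galanti-style nearest-mean bound can be imported cleanly with the constants and per-class normalization used here, so that the $1/(k-1)$ factor from Proposition~\ref{thm:NC_GC_bound} exactly cancels the $1/(k(k-1))$ coming from $\operatorname{Avg}_{i\neq j}$ and leaves us with the plain sum $\sum_{i\neq j} 1/d_{ij}^2$ on the right-hand side. Everything else — the Poincar\'e step, the empirical-to-theoretical GC concentration, and the final algebraic composition — is handled by the propositions already proven earlier in the paper.
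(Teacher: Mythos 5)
Your proposal follows exactly the same route as the paper's proof: import the nearest-mean classifier bound from Galanti et al.'s Proposition 5, replace the CDNV/neural-collapse quantity by the geometric complexity via Proposition~\ref{thm:NC_GC_bound} (using the Poincar\'e inequality), and then pass to the empirical $\widehat{\GC}$ via Proposition~\ref{prop:batch_GC_theoretical_GC}, with the high-probability qualifier inherited from that single concentration step. Your added bookkeeping about the $(k-1)$ cancellation and the $\log(2/\delta)$ versus $\log(1/\delta)$ discrepancy is a fair (and correct) reading of a minor constant-level imprecision in the stated proposition, but it does not change the argument.
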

\begin{proof}
This follows immediately from \cite[Proposition 5]{galanti2022on} which gives the bound in terms of the neural collapse instead of the geometric complexity. By using Proposition \ref{section:proof_of_NC_GC_bound} we can replace the neural collapse with the geometric complexity and using Proposition \ref{prop:batch_GC_theoretical_GC} we then replace the theoretical GC with the empirical GC, yielding the additional term with the Poincar\'e constant. 
\end{proof}

In Figure \ref{figure:bound_tightness}, we plot the LHS and RHS of the generalization bound in \eqref{equation:generalization_bound} from Proposition \ref{prop:generalization_bound} for a VGG-13 model trained on CIFAR-10. In this plot, we see that the bound is not vacuous, demonstrating a relatively tight fit. Note that on the RHS, we omitted the term involving the Lipschitz constant $L$, assuming it is small due to its denominator. Additionally, we estimated a lower bound for the Poincar\'e constant $c$ by comparing the magnitudes of the RHS and LHS in the inequality \eqref{eqn:NC bounds GC}, based on results in Figure \ref{figure:cifar_vgg13_stage1_sweep}. This approximation yielded $c \approx 1000$ in our setup.

\begin{figure}
\centering
\includegraphics[width=3in]{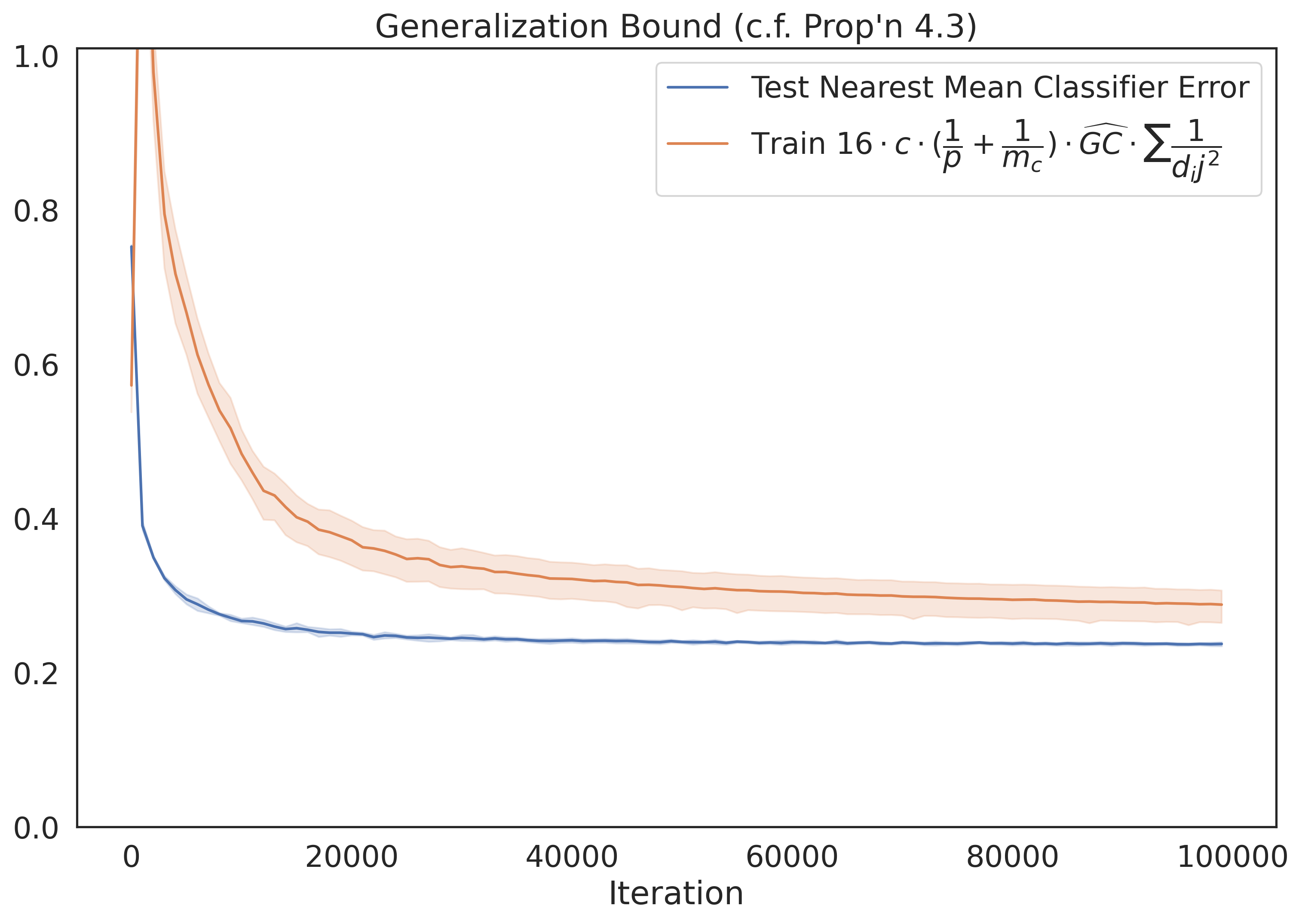}
\caption{VGG-13 trained on CIFAR-10 with 5 random seeds.} 
\label{figure:bound_tightness}
\end{figure}

\section{The impact of geometric complexity on transfer learning}\label{section:stage_2}
Many implicit regularization mechanisms in gradient descent exert a pressure on geometric complexity which in turn constrain the neural collapse. In this section, we discuss the affect of this implicit bias in transfer learning. Specifically, we show with theory how this bias during pre-training can help explain the mechanisms behind transfer learning. Namely, that lower GC in a pre-trained embedding network promotes neural collapse on new target classes, simplifying the fine-tuning process.

However, our bound makes it clear that certain compatibility conditions between the pre-training (source) distribution and the fine-tuning (target) distribution are needed, which we argue are satisfied in image foundational models and large language models. At last, we verify empirically that known methods to reduce geometric complexity for the pre-trained embedding result in better performance during fine-tuning on unseen tasks, in agreement with our neural collapse bound for transfer learning. 

\subsection{Lower pre-trained GC leads to improved fine-tuning}

To analyze the impact of GC in the context of transfer learning, we consider the same formal setup as in \cite[Proposition 2]{galanti2022on}. Assume a class $c$ in a set $\mathcal C$ of classes is represented by a class-conditional distribution $Q_c(x) = P(x | y=c)$. Both the pre-training/source classes $\widetilde Q_1,\dots, \widetilde Q_k$ and the fine-tuning/target classes $Q_1,\dots, Q_l$ are assumed to be drawn from a distribution over all  classes in $\mathcal C$. The induced input distribution on all the classes (i.e., the combined source and target input distribution) is denoted $Q$, while the source-only input distribution is denoted by $\widetilde Q$. Let $\mathcal{F}^*$ denote the set of pre-trained feature maps  $f:\mathbb R^d\rightarrow \mathbb R^p$ selected by the training procedure (e.g., the set of trained embeddings with different initialization seeds but the same training protocol) and consider
\begin{equation}
    \Delta(\mathcal{F}^*) = \inf_{f\in \mathcal{F}^*}
        \inf_{c\neq c'} \|\mu_f(Q_c) - \mu_f(Q_{c'})\|.
\end{equation}
With this, we can state the transfer learning bound

\begin{proposition}\label{thm:transfer_bound}
Suppose that the source and target input distributions satisfy a Poincar\'e inequality with constant $c_{\tilde{Q}}$ and $c_Q$ (resp.). Then, with probability $1-\delta$, over the selection $Q_c$ and $Q_{c'}$ of target class  distributions, we have that the CDNV expectation for two target classes is bounded by the geometric complexity of the embedding network $f$ in the following way
\begin{eqnarray*}
    \mathbb E_{Q_c\neq Q_{c'}} \left[V_f(Q_c, Q_{c'})\right]
    & \leq & 
         \frac { c_{\tilde{Q}}\GC(f,\widetilde Q)}{k-1}
        \left(\sum_{i\neq j} \frac 1{d_{ij}^2}\right) \\
    &     & +
        \left(
        8 + \frac{16 c_{Q} \sup_{c\in C} \GC(\mathcal{F}^*, Q_c)}{\Delta(\mathcal{F}^*)}
        \right)
        \left(
            \frac{\sqrt{2\pi\log(k)}\mathfrak{H}(\mathcal{F}^*, \widetilde Q)}{(k-1)\Delta(\mathcal{F}^*)}
        \right) \\
    &     & + 
        \left(
        1 + \dfrac{4\sup_{\substack{x \in \supp(Q)\\f \in \mathcal{F}^*}}\|f(x)\|}{\Delta(\mathcal{F}^*)^2}
        \right)
        \left(
        \frac{2\sqrt{\log\frac 1\delta}c_{Q}\sup_{\substack{c\in \mathcal{C}}}\GC(\mathcal{F}^*,Q_c)}{\sqrt{k}\Delta(\mathcal{F}^*)^2}
        \right)
\end{eqnarray*}
where $k$ is the number of source classes, $\GC( \mathcal{F}^*, Q_c) = \sup_{f\in\mathcal{F}^*}\GC(f,Q_c)$, $d_{ij} = \|\mu_f(Q_{c_i}) - \mu_f(Q_{c_j})\|$,  and  $\mathfrak{H}(\mathcal{F}^*, \widetilde Q)$ is a complexity measure for $\mathcal{F}^*$ over $\widetilde{Q}$ (see Appendix \ref{appendix:def_of_H}).
\end{proposition}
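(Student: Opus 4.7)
The plan is to derive the bound by combining the transfer learning result of \cite[Proposition 2]{galanti2022on} with the GC-based control on neural collapse already established in Proposition \ref{thm:NC_GC_bound}, and with the Poincar\'e inequality applied on a per-class basis. The result of \cite[Proposition 2]{galanti2022on} produces a bound of precisely the shape stated here, but expressed in terms of the source neural collapse $\NC(f,\widetilde Q)$ and in terms of per-class variances $\Var_f(Q_c)$ wherever the factors $16\Var/\Delta$ and $4\sup\|f\|/\Delta^2$ appear. Our job is to translate each of those intrinsic-variance quantities into statements about the geometric complexity.

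First I would invoke Proposition 2 of \cite{galanti2022on} verbatim as the starting inequality. That gives the three-term structure: a source neural-collapse term, a concentration term involving the complexity measure $\mathfrak{H}(\mathcal{F}^*,\widetilde Q)$ and the class-separation $\Delta(\mathcal{F}^*)$, and a residual term involving $\sup\|f(x)\|$ and $\Delta(\mathcal{F}^*)^2$. The hypotheses on the class-conditional distributions, the balanced source sample, and the selection rule defining $\mathcal{F}^*$ match those required for that proposition, so no modification of its probabilistic content is needed: the high-probability statement at level $1-\delta$ passes through unchanged.

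Next, I would replace the first term, which in \cite{galanti2022on} is the source neural collapse $\NC(f,\widetilde Q)$, by applying Proposition \ref{thm:NC_GC_bound} to the source distribution $\widetilde Q$ with Poincar\'e constant $c_{\widetilde Q}$. This immediately yields the geometric collapse
\[
\frac{c_{\widetilde Q}\,\GC(f,\widetilde Q)}{k-1}\left(\sum_{i\neq j}\frac{1}{d_{ij}^2}\right),
\]
which is the first term appearing in the statement. For the remaining two terms, each occurrence of $\Var_f(Q_c)$ is bounded using the Poincar\'e inequality \eqref{equation:poincare} on the target class-conditional distribution, giving $\Var_f(Q_c)\leq c_Q\,\GC(f,Q_c)$. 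Taking the supremum over $f\in\mathcal{F}^*$ turns this into $c_Q\,\GC(\mathcal{F}^*,Q_c)$ in the notation of the statement, and then taking the supremum over $c\in\mathcal{C}$ produces the factor $\sup_{c\in\mathcal{C}}\GC(\mathcal{F}^*,Q_c)$ that appears in both the second and third terms. The constants $8$, $16$, $\sqrt{2\pi\log k}$, $1$, $4$, and $2$ are inherited unchanged from \cite[Proposition 2]{galanti2022on}.

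The main technical obstacle is simply bookkeeping: ensuring that the Poincar\'e-based substitution $\Var_f(Q_c)\leq c_Q\,\GC(f,Q_c)$ is legitimately applicable inside the supremum over $\mathcal{F}^*$, which is why I would first pass to $\sup_{f\in\mathcal{F}^*}\Var_f(Q_c)$ and then bound that by $\sup_{f\in\mathcal{F}^*}c_Q\,\GC(f,Q_c)=c_Q\,\GC(\mathcal{F}^*,Q_c)$, using the same Poincar\'e constant uniformly in $f$ (the constant depends only on $Q_c$). A secondary point to verify is that the dependencies on $\delta$ and $k$ match those in \cite{galanti2022on}: the $\sqrt{\log(1/\delta)}/\sqrt{k}$ arises from a Hoeffding-type concentration over the $k$ sampled source classes and is untouched by our substitutions, so combining the three resulting pieces yields the stated inequality with probability $1-\delta$.
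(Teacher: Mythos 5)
Your proposal is correct and follows essentially the same route as the paper's own proof: start from Proposition 2 of \cite{galanti2022on}, replace the source neural-collapse term by the geometric collapse via Proposition \ref{thm:NC_GC_bound} with constant $c_{\widetilde Q}$, and swap each per-class variance $\Var_f(Q_c)$ for $c_Q\,\GC(f,Q_c)$ via the Poincar\'e inequality before taking suprema over $\mathcal{F}^*$ and $\mathcal{C}$. Your write-up is in fact more careful than the paper's one-line argument about the order of the supremum and the substitution.
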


\begin{proof}
This follows immediately from \cite[Proposition 2]{galanti2022on} which proves this bound for the NC and distribution variances. Using our Proposition \ref{thm:NC_GC_bound}, we can replace the NC by the GC and swap the variances by the corresponding geometric complexities via the respective Poincar\'e inequality.
\end{proof}

\paragraph{Interpretation.} In the bound above, the first term depends on the geometric complexity measured over the source distribution $\tilde{Q}$ and  decreases as the number of source classes $k$ increases. Thus, we can predict that lower $\GC(f, \tilde{Q})$ encourages lower NC values (i.e. more neural collapse) on the new target classes provided that the second and third term of the bound are also small.

However these other two terms involve both the source and target classes, making the full bound no longer dependent only on the geometric complexity over $\tilde{Q}$. 

Moreover these two terms can apriori explode since the infimum over the class-mean distances $\Delta(\mathcal F^*)$ no longer only involves source classes (where this distance is bounded away from zero due to neural collapse into an equidistant simplex) but also target classes (for which we no longer have theoretical guarantees).

However, if the source labels are granular enough in the sense the target labels can be represented as combinations of the source labels (as for instance a target label of “dog” can be subsumed as all the breeds of dogs in source classes on ImageNet), then the issues above are mitigated. Formally, we can summarize this granularity compatibility condition between the source and target classes as follows: For each label $c$ of the target class there exists labels $c_1,\dots, c_k$ in the source classes such that $Q_c = 1/k(Q_{c_1}+\cdots + Q_{c_k})$. 
Because $\GC(f, \cdot)$ is linear, we have
\[
\sup_{c\in \mathcal{C}}\GC(\mathcal{F}^*,Q_c) \leq \sup_{c\in [k]}\GC(\mathcal{F}^*,\tilde{Q}_c) 
\]
and thus recover dependence only on source classes. 

To address the $\Delta(\mathcal{F}^*)$ term, observe that the compatibility condition above implies in terms of class-means that $\mu_f(Q_{c}) = \frac 1k (\mu_f(Q_{c_1})+ \cdots + \mu_f(Q_{c_k}))$. Because of neural collapse during pre-training, these source class-means form the vertices of a face in the neural-collapse class-mean simplex making the target-class mean $\mu_f(Q_c)$ the barycenter of this face. Since the distance between any two points taken in the vertices of a simplex plus its barycenters is bounded away from zero, so is $\Delta(\mathcal F^*)$. Note, albeit intuitive and natural, this granularity condition is hard to verify in practice.

\subsection{Improve fine-tuning by controlling pre-trained GC}

When the compatibility conditions above are satisfied, Proposition \ref{thm:transfer_bound} indicates that increasing the amount of neural collapse for the target classes can be achieved by lowering the embedding GC during pre-training. We verify this indeed occurs experimentally using the same regularization techniques exploited in Section \ref{section:stage_1}. Furthermore, we verify that these implicit methods of controlling pre-training GC produce feature maps that perform better on fine-tuning tasks on CIFAR-FS with a ResNet-18 in Figure \ref{figure:transfer_learning} and on mini-ImageNet with VGG-16 in Appendix \ref{appendix:imagenet}.

\begin{figure}[h] 
\centering
\includegraphics[width=\linewidth]{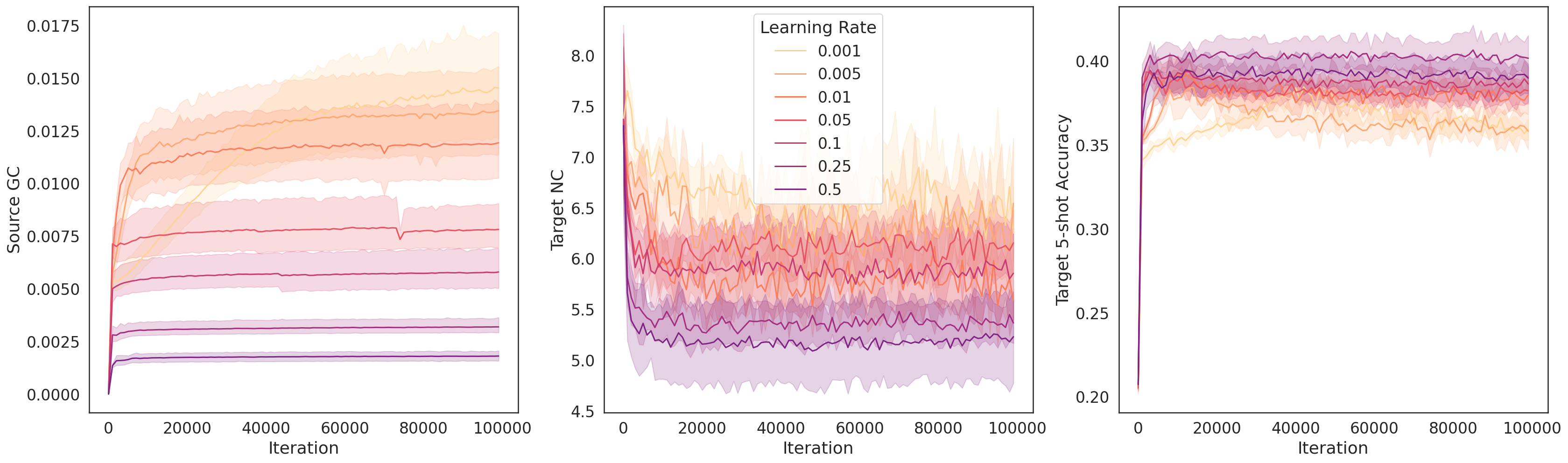}
\includegraphics[width=\linewidth]{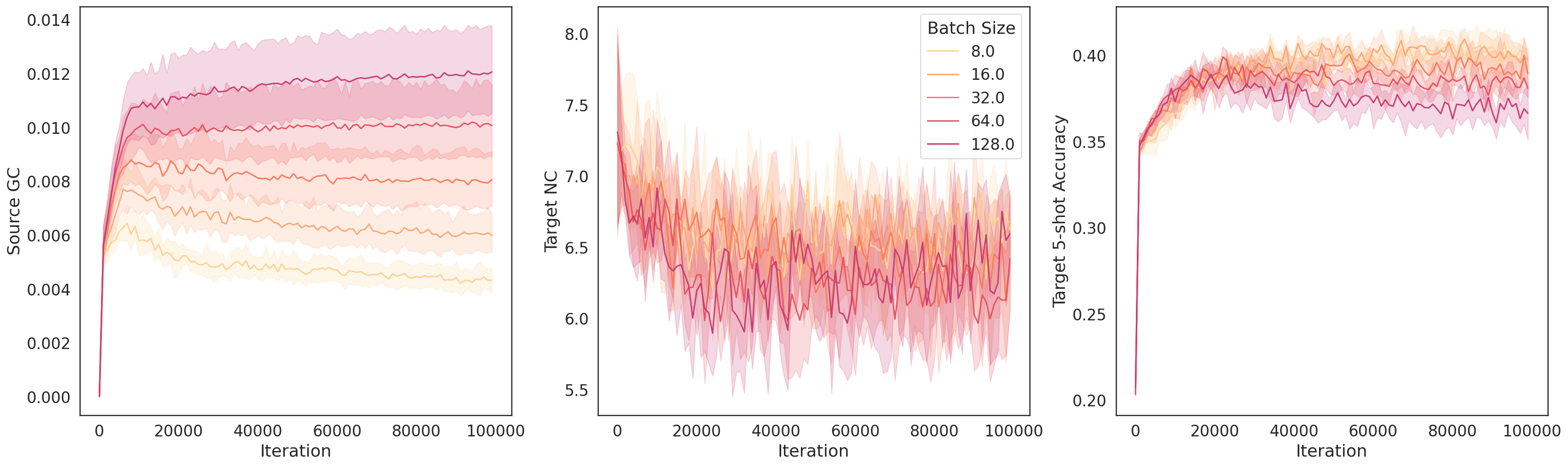}
\includegraphics[width=\linewidth]{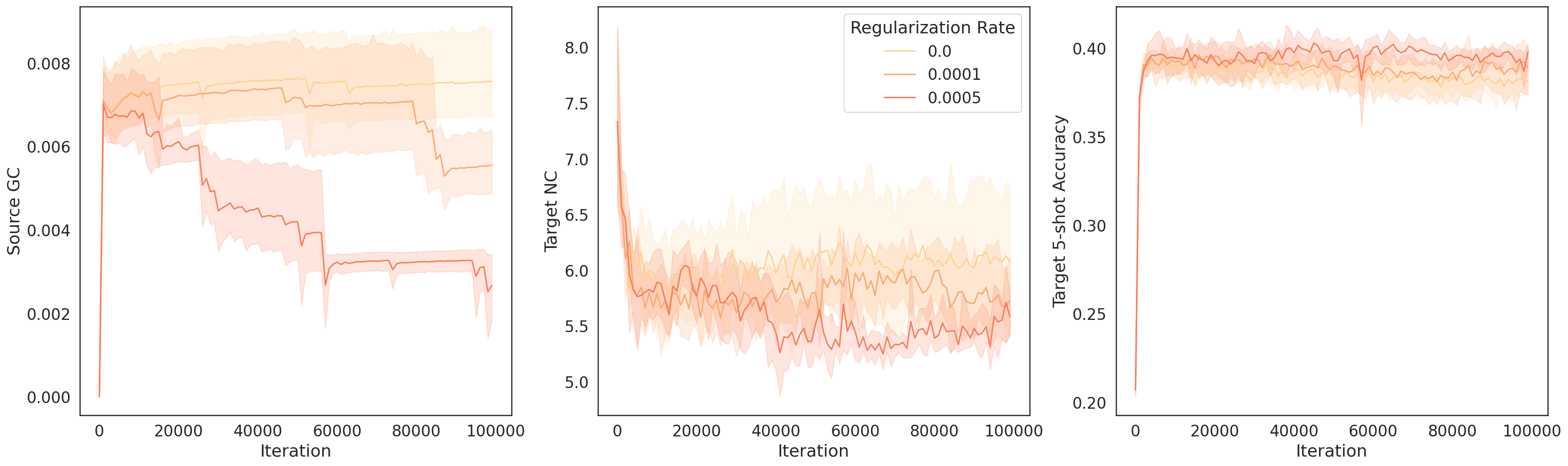}
\caption{Controlling target neural collapse through source GC on CIFAR-FS with ResNet-18. Lower Source GC produces more neural collapse on target classes (i.e. lower target NC), and higher 5-shot transfer accuracy for \textbf{Top row:} increased learning rates, \textbf{Middle row:} decreased batch sizes, and \textbf{Bottom row:} increased L2 regularization.
}
\label{figure:transfer_learning}
\end{figure}

\section{Limitations and Conclusion} 
There is a notable limitation when extending our findings to language modeling and, for example, large language models (LLMs). However, this limitation is not specific to our work per-se, but instead it is a limitation of the application of neural collapse to language models in general, as described in the recent work \cite{wu2024linguisticcollapse}. 
Namely, language modeling, as conducted via training by token prediction, creates a classification task where the conditions for neural collapse are implausible. The main problem, in addition to an imbalanced vocabulary, is that the embedding dimension for language models is typically far less than the number of classes (i.e., the total vocabulary size), making the neural collapse simplex impossible to exist. Extending the notion of neural collapse to the language models is an open question and an active area of research and beyond the scope of this work.

Uncovering and understanding the implicit biases that enable successful transfer learning is a critical area of research in modern machine learning. Here we provide a framework connecting three different themes behind implicit regularization: flatness of the loss surface,  geometric complexity, and neural collapse. We show that the embedding geometric complexity directly controls the neural collapse during training and, moreover, plays a role in the success of transfer learning. 

Extensive experiments on different image classification and fine tuning tasks across different hyperparameters verify our hypothesis. We believe this opens up an intriguing direction of research further exploring the role of geometric collapse and geometric complexity in deep learning and provides valuable insight for designing more efficient and effective techniques for model training and fine-tuning.

\begin{ack}
We would like to thank Patrick Cole for their support. We would also like to thank Hanna Mazzawi and Peter Bartlett for helpful conversations.
\end{ack}

\clearpage
\newpage

\bibliographystyle{plainnat}
\bibliography{references.bib}

\newpage
\clearpage
\appendix

\section{Appendix}

\subsection{Proof of Proposition 3.2 (Neural collapse bound)}\label{section:proof_of_NC_GC_bound}

\begin{proof} Suppose that we have an input probability distribution $Q(x) = q(x)dx$ coming from a data distribution $P(X,Y)$ (i.e. $q(x)dx$ is the marginal probability distribution of $p(x,y)dxdy$) where $Y$ represents the $k$ possible classes. Suppose further that $P(X,Y)$ is a balanced multi-class distribution. In terms of the input distribution $Q(x)$ this means that
$$Q = \frac 1k(Q_1+\cdots+Q_k)$$
where $Q_i(x) = q_i(x)dx$ is the input distribution of class $i$. Suppose moreover that $Q(x)$ satisfies the Poincar\'e inequality in \eqref{equation:poincare}, that is,
$$
\operatorname{Var}_f(Q) \leq c\mathbb E_Q(\|\nabla_x f\|_F^2) = c\operatorname{GC}(f, Q)
$$
for some constant $c$. Then the statement we want to prove is that the geometric complexity of $f$, that is,
$$
\GC(f,Q) := \int \|\nabla_x f(x)\|^2q(x)dx
$$
bounds its neural collapse as measured by 
\begin{equation}
    \operatorname{NC}(f, Q) := \frac 1{\#\{i\neq j\}}\sum_{i\neq j}
        \left(
             \frac{\operatorname{Var}_f(Q_i) + \operatorname{Var}_f(Q_j)}{
           2 \|\mu_f(Q_i) - \mu_f(Q_j)\|^2
        }
        \right)
\end{equation}
More precisely, we want to prove that we have the following inequality:
\begin{equation}
\NC(f,Q) \leq \frac{c \cdot \GC(f,Q)}{k-1}\left(\sum_{i\neq j} \frac 1{d_{ij}^2}\right),
\end{equation}
where $d_{ij} = \|\mu_f(Q_i) - \mu_f(Q_j)\|$ is the distance between the mean of class $i$ and class $j$. Let us now prove this statement. 

First of all, since the geometric complexity respects convex sums of data densities, we have that for a distribution $Q=\frac 1k(Q_1+\cdots Q_k)$ we can write
\begin{equation}
    \GC(f, Q) = \frac 1k (\GC(f,Q_1)+\cdots +\GC(f, Q_k)).
\end{equation}
In particular, this means that $\GC(f, Q_i) \leq k \GC(f, Q)$.  Furthermore, the Poincare inequality ensures that $\Var_{Q_i}(f) \leq c \GC(f, Q_i)$. Using these two properties and the definition of the neural collapse, we obtain that:
\begin{eqnarray*}
    \NC(f,Q) 
    & = & \frac{1}{\#\{i\neq j\}} \sum_{i\neq j} \left(\frac{\var_{f}(Q_i) + \var_{f}(Q_j)}{2\|\mu_f(Q_i) - \mu_f(Q_j)\|^2}\right) \\
     & \leq  & \frac{c}{\#\{i\neq j\}}\sum_{i\neq j}
        \frac{\GC(f,Q_i)+\GC(f,Q_j)}{2d_{ij}^2} \\
    & \leq & \frac{ck \GC(f, Q)}{k(k-1)} \left(\sum_{i\neq j}\frac 1{d_{ij}^2}\right),
\end{eqnarray*}
which simplifies to the desired result.
\end{proof}

\begin{remark}
Excepting the Poincar\'e constant, the quantity on the RHS above given by  
\begin{equation}
    \frac{\GC(f,Q)}{k-1}\left(\sum_{i\neq j} \frac 1{d_{ij}^2}\right)
\end{equation}
could be taken as an alternative measure of neural collapse. We call this the \textbf{geometric collapse}.
\end{remark}

\subsection{Proof of Proposition \ref{prop:batch_GC_theoretical_GC} (Estimating the Theoretical $\GC$ by $\widehat{\GC}$)}\label{appendix:proof_estimate_GC}
\begin{proof}
Let $Q$ be an (input) probability distribution defined over $\mathbb{R}^d$ and let $D= \{x_i\}_{i=1}^m$ of $m \geq 1$ points drawn as i.i.d.~samples from $Q$. Given the map $f: \mathbb{R}^d \to \mathbb{R}^p$, we denote the empirical geometric complexity of $f$ over $D$ by $\widehat{\GC}(f, D)$ which is defined as
$$
\widehat{\GC}(f, D) = \dfrac{1}{m}\sum_{i=1}^m \|\nabla_x f(x_i)\|^2_F.
$$
We begin by showing that 
$$
\mathbb{E}_{D \sim Q^m}\left[\widehat{\GC}(f, D)\right] = \GC(f, Q).
$$
This follows by direct computation, keeping in mind that $Q$ is a probability distribution and that the points are independently sampled. We can write $Q$ as $Q(x) = q(x)dx$ and note that
\begin{eqnarray*}
\mathbb{E}_{D \sim Q^m}\left[\widehat{\GC}(f, D)\right] 
&=& \mathbb{E}_{x_1, \dots, x_m \sim Q^m}\left[\dfrac{1}{m}\sum_{i=1}^m \|\nabla_x f(x_i)\|^2_F \right] \\
&=& \dfrac{1}{m} \int_{\mathbb{R}^{m \times d}}\sum_{i=1}^m \|\nabla_x f(x_i)\|^2_F dQ^m(x_1,\dots,x_m) \\
&=& \dfrac{1}{m} \int_{\mathbb{R}^{m \times d}}\sum_{i=1}^m \|\nabla_x f(x_i)\|^2_F q(x_1)\cdots q(x_m) dx_1 \cdots dx_m \\
&=& \dfrac{1}{m} \sum_{i=1}^m \int_{\mathbb{R}^{(m-1) \times d}} \left[\int_{\mathbb{R}^d} \|\nabla_x f(x_i)\|^2_Fq(x_i)dx_i \right] q(x_1) \cdots\widehat{q(x_i)}\cdots q(x_m) dx_1 \cdots \widehat{dx_i}\cdots dx_m \\
&=& \dfrac{1}{m} \sum_{i=1}^m  \left[\int_{\mathbb{R}^d} \|\nabla_x f(x_i)\|^2_Fq(x_i)dx_i \right] \\
&=& \dfrac{1}{m} \sum_{i=1}^m  \left[\int_{\mathbb{R}^d} \|\nabla_x f(x_i)\|^2_FdQ(x_i) \right] \\
&=& \dfrac{1}{m} \sum_{i=1}^m  \GC(f, Q) \\
&=& \GC(f, Q).
\end{eqnarray*}

Now suppose we have two separate independent samples $D$ and $D'$ each of of size $m \geq 1$ which differ by exactly one point, say $x_i$ in $D$ and $x_i'$ in $D'$. By assumption, the map $f$ is $L$-Lipschitz for some Lipschitz constnt $L>0$. Therefore, we have
$$
\widehat{\GC}(f, D) - \widehat{\GC}(f, D') = \dfrac{1}{m}\left(\|\nabla_x f(x_i)\|^2_F - \|\nabla_x f(x_i')\|^2_F\right) \leq L^2/m,
$$
and similarly, $\widehat{\GC}(f, D') - \widehat{\GC}(f, D) \leq L^2/m$. It follows that
$
|\widehat{\GC}(f, D) - \widehat{\GC}(f, D')| \leq L^2/m
$
and by applying McDiarmind's inequality (e.g., \cite{mohri2018foundations}), we have that for any $\epsilon > 0$, 
\begin{equation}\label{equation:mcdiarmids}
\mathbb{P}\left[\widehat{\GC}(f, D) - \mathbb{E}_{D \sim \mu^m}[\widehat{\GC}(f, D)] \leq \epsilon\right] \geq 1 - \exp(-2m\epsilon^2/L^2).
\end{equation}

Since, from the computation above, $\mathbb{E}_{D \sim \mu^m}[\widehat{\GC}(f, D)] = \GC(f, Q)$ and setting $\delta/2 = \exp(-2m\epsilon^2/L^2)$ and substituting for $\epsilon$ in \eqref{equation:mcdiarmids}, we get that for any $\delta > 0$ with probability as least $1 - \delta/2$ the following holds:
$$
\GC(f, Q) \leq \widehat{\GC}(f, D) + L \sqrt{\dfrac{\log \frac{2}{\delta}}{2m}}.
$$
This completes the proof. 
\end{proof}

\subsection{Definition of the complexity measure $\mathfrak{H}(\mathcal F,  Q)$ in Proposition 4.1}\label{appendix:def_of_H}

Let us restate here for the sake of completeness the definition given in \citet{galanti2022on}. First consider the Rademacher complexity $R(A)$ of a set $A\subset \R^n$ which is given as the expectation $\mathbb E_\epsilon(X)$ of the random variable $X = \sup_{a\in A}\langle \epsilon, a\rangle$ where $\epsilon = (\epsilon_1, \dots, \epsilon_n)$ is a random vector with components uniformly distributed among the two values -1 and 1.
Now consider the input distribution $Q$ coming from a multiclass distribution with labels in $c\in\mathcal C$ with label input distribution $Q_c$. We define the complexity measure $\mathfrak{H}(\mathcal F,  Q)$ for the candidate functions in $\mathcal F$ over the source distribution $Q$ as the Rademacher complexity of the following set:
\[
\left\{
\left(
\mu_f(Q_c),\,\operatorname{Var}_{f}(Q_c)):\: c\in \mathcal C,\, f\in \mathcal F
\right)
\right\}.
\]

\subsection{Experiment details}

\subsubsection{Figure \ref{figure:cifar_vgg13_stage1_sweep}: Geometric Complexity Controls Neural Collapse on CIFAR-10.} \label{appendix:stage_1_experiments}
We trained a VGG-13 neural network on the full CIFAR-10 dataset with the provided architecture \cite{simonyan2014very} and using the standard train/test split. Throughout training, we reported the following metrics measured and averaged over multiple batches of the training dataset: 1) the geometric complexity of the model embedding layer; i.e., the layer on which the neural collapse is measured, 2) the neural collapse as measured by \eqref{equation:neural_collapse} measured on the model embedding layer, and 3) the geometric collapse of the model also measured on the penultimate embedding layer of the model and given by
\begin{equation}
    \frac{\GC(f,Q)}{k-1}\left(\sum_{i\neq j} \frac 1{d_{ij}^2}\right)
\end{equation}
where $Q$ is the training distribution, $k=10$ as this was CIFAR-10, and (as in the paper) where $d_{ij} = \|\mu_f(Q_i) - \mu_f(Q_j)\|$ is the distance between the sample means of class $i$ and class $j$. 

These quantities were measured every 1000 steps of a very long pre-training of $100,000$ steps.
The optimizer was plain SGD (note we obtained similar results with momentum) without any regularization nor schedule to avoid masking effects. We used random crop and random flip for data augmentation. The results were all averaged over 5 random seeds for the neural network parameter default initialization. The plots have no smoothing applied to the learning curves.  
{\bf Top row:} We swept over a learning rate range of  $\{0.001, 0.0025, 0.005, 0.01, 0.025, 0.1\}$ with a constant batch size of 512.
{\bf Middle row:} We swept over a batch size range of  $\{8, 16, 32, 64, 128, 256\}$ with a constant learning rate of 0.01.
{\bf Bottom row:} We swept over a L2 regularization rate range of $\{0.0, 0.00025, 0.0005, 0.001, 0.0025\}$ with learning rate 0.01 and batch size 256. Each sweep took roughly 10h of training on a single Google Cloud TPU V3 accessed via a Google colab.

In Figure \ref{figure:learning_curves}, we show the complete learning curves of that experiment.

\begin{figure}[h]
\centering
\includegraphics[width=5.5in]{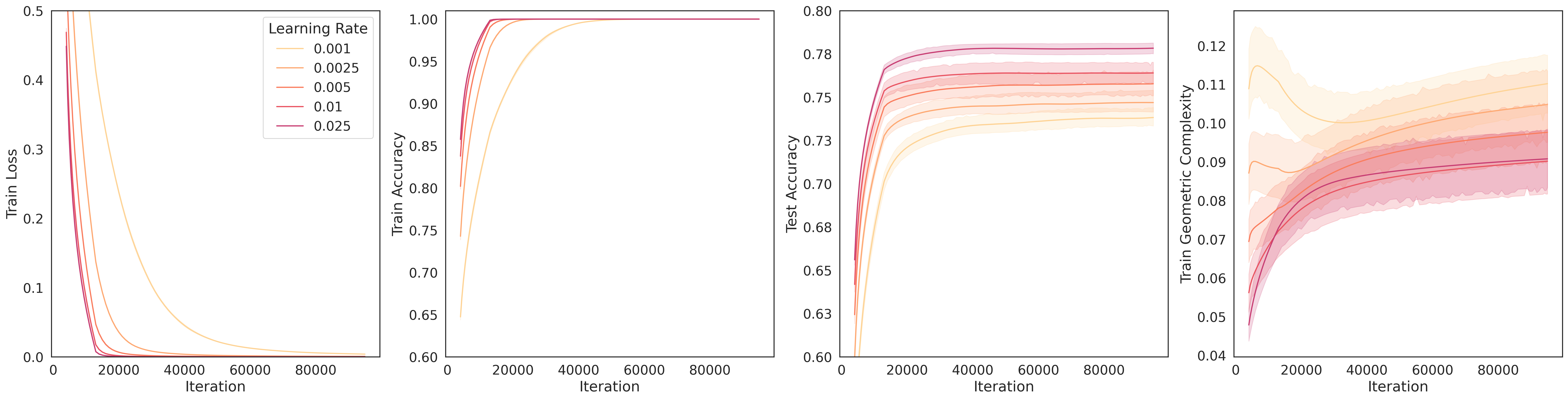}
\includegraphics[width=5.5in]{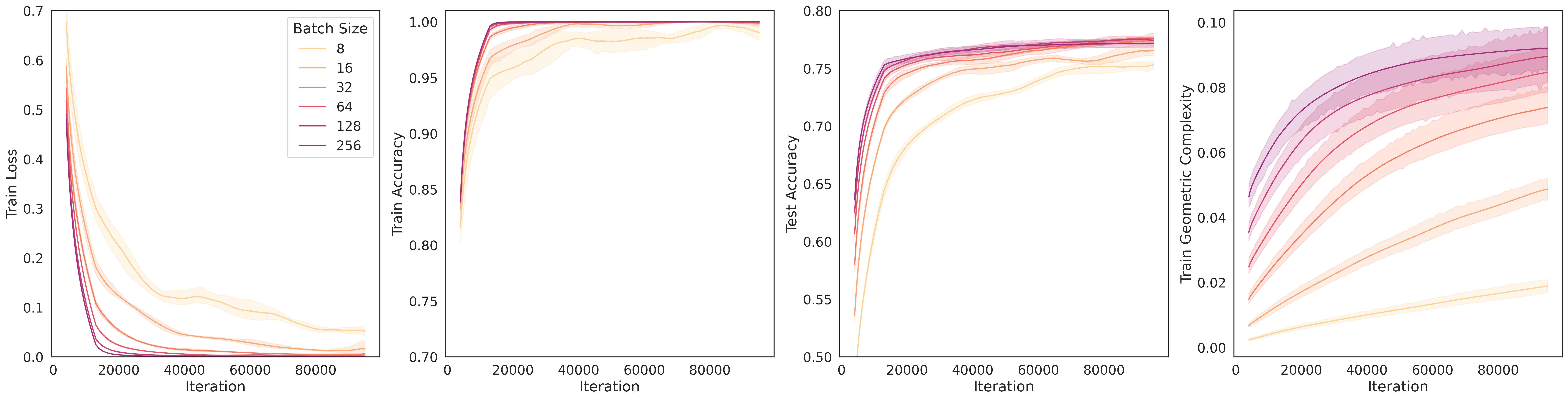}
\includegraphics[width=5.5in]{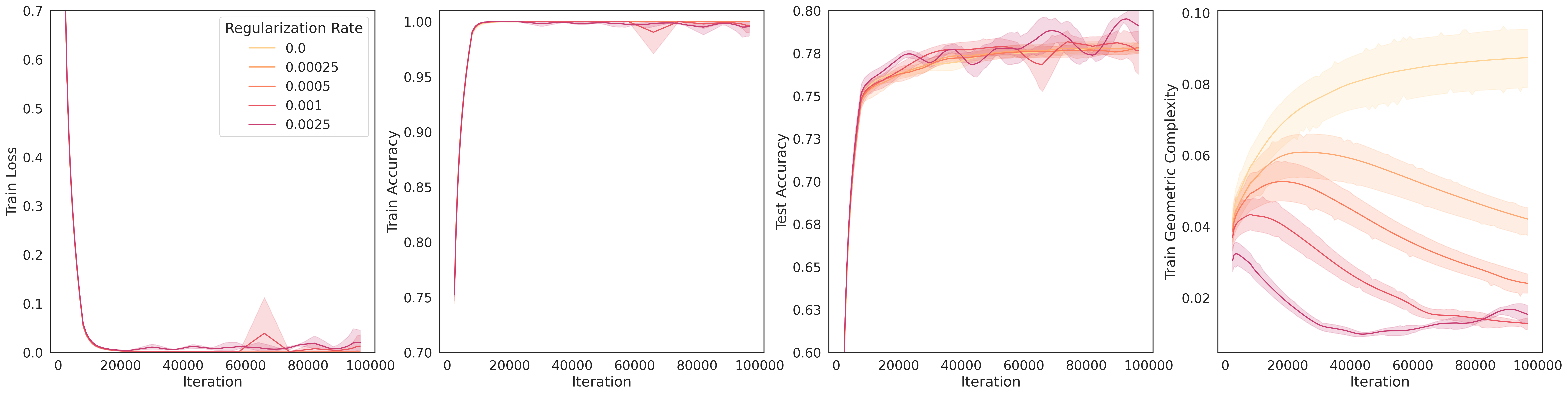}
\caption{VGG-13 on CIFAR-10. Training has reached terminal phase of training (TPT) with training accuracy equal to 1 (see \textbf{first} and \textbf{second} column). Lower $\text{GC}$ in training correlates with higher test accuracy in all settings (see \textbf{third} and \textbf{fourth} column).}
\label{figure:learning_curves}
\end{figure}

\subsubsection{Figure \ref{figure:bound_tightness}: Tightness of the generalization bound on CIFAR-10}

We trained a VGG-13 neural network \cite{simonyan2014very} on the full CIFAR-10 dataset using the provided train/test split. For this model architecture we use an embedding layer dimension $p = 1024$. Note also that the number of examples per class is $m_c = 5000$. We used a constant learning rate of $0.005$, a batch size of $512$ and trained for $100000$ steps reporting metrics every $1000$ steps. 

Throughout training, we reported the following metrics measured and averaged over multiple batches of the training dataset: 1) the empirical geometric complexity of the model embedding layer; i.e., the layer on which the neural collapse is measured, denoted $\widehat{\GC}$, and 2) the sum of the inverse squares of $d_{ij}$ which denotes the distance between the mean of class $i$ and class $j$ for $i, j \in [k]$ for $i\neq j$ and, here, $k=10$. These quantities were used to create the blue curve in Figure \ref{figure:bound_tightness}. We also computed the nearest mean classifier error on the test set. Recall, for the feature map $f$ and a sample $S$ the nearest mean classifier is defined as $h_{f, S} := \argmin_{c \in [k]}\|f(x) - \mu_f(S_c)\|$ where, here $k = 10$, and $\mu_f(S_c)$ denotes the sample mean of the set $S_c$ for class $c$ under the feature map $f$. To create the orange curve in Figure \ref{figure:bound_tightness} we plot the average error $h_{f,S}(x) \neq y$ over the test set.

\subsubsection{Figure \ref{figure:transfer_learning}: Controlling target performance through source GC on CIFAR-FS.} \label{appendix:figure4}

We trained a RestNet-18 neural network with width 1 implemented in Flax \url{https://github.com/google/flax/blob/main/examples/imagenet/models.py} on CIFAR-FS with its initial convolution modified to stride 1 and kernel size of 3 to adapt to CIFAR-FS  instead of ImageNet. We used only 10 classes for the source datasets with 600 images per class, which further we split in a 10/90 split for the test and train splits. The metrics were computed on an average of 100 random samples of 5 classes in the remaining classes of CIFAR-FS. We reported 1) the geometric complexity of the embedding layer measured on the source dataset (Source GC), 2) the neural collapse as measured by \eqref{equation:neural_collapse} measured and averaged on the 100 random samples of 5-label samples from the target dataset, 3) the test accuracy obtained by solving the normal equation directly for a ridge regression with only 5 examples per class obtained from the target set. These three quantities were measured every 1000 steps of a very long pre-training of $100_000$ steps.
The optimizer was plain SGD (note we obtained similar results with momentum) without any regularization nor schedule to avoid masking effects. We used random crop and random flip for data augmentation. The results were all averaged over 5 random seeds for the neural network parameter default initialization. The plots have no smoothing applied to the learning curves. 
{\bf Top row:} We swept over a learning rate range of  $\{0.001, 0.005, 0.01, 0.05, 0.1, 0.25, 0.5\}$ with a constant batch size of 256.
{\bf Middle row:} We swept over a batch size range of  $\{8, 16, 32, , 64, 128\}$ with a constant learning rate of 0.005.
{\bf Bottom row:} We swept over a L2 regularization rate range of $\{0, 0.0001, 0.0005\}$
Each sweep took roughly 10h of training on a single Google Cloud TPU V3 accessed via a Google colab. 

\subsection{Additional Experiments}
\label{appendix:additional_experiments}

\subsubsection{Cifar-10 on ResNet-18}
In Figure \ref{figure:cifar_resnet18_stage1_sweep}, we trained a ResNet-18 neural network on CIFAR-10. The experimental setup is the same as described in Appendix \ref{appendix:stage_1_experiments}.  The results were all averaged over 5 random seeds for the neural network parameter default initialization. The plots have no smoothing applied to the learning curves.  

{\bf Top row:} We swept over a learning rate range of  $\{0.005, 0.01, 0.025, 0.05, 0.1, 0.2\}$ with a constant batch size of 256. 
{\bf Middle row:} We swept over a batch size range of  $\{8, 16, 32, 64, 128, 256\}$ with a constant learning rate of 0.2.
{\bf Bottom row:} We swept over a L2 regularization rate range of $\{0.0, 0.0001, 0.00025, 0.0005, 0.00075\}$ with learning rate 0.02 and batch size  512.

\begin{figure}[ht] 
\centering
\includegraphics[width=\linewidth]{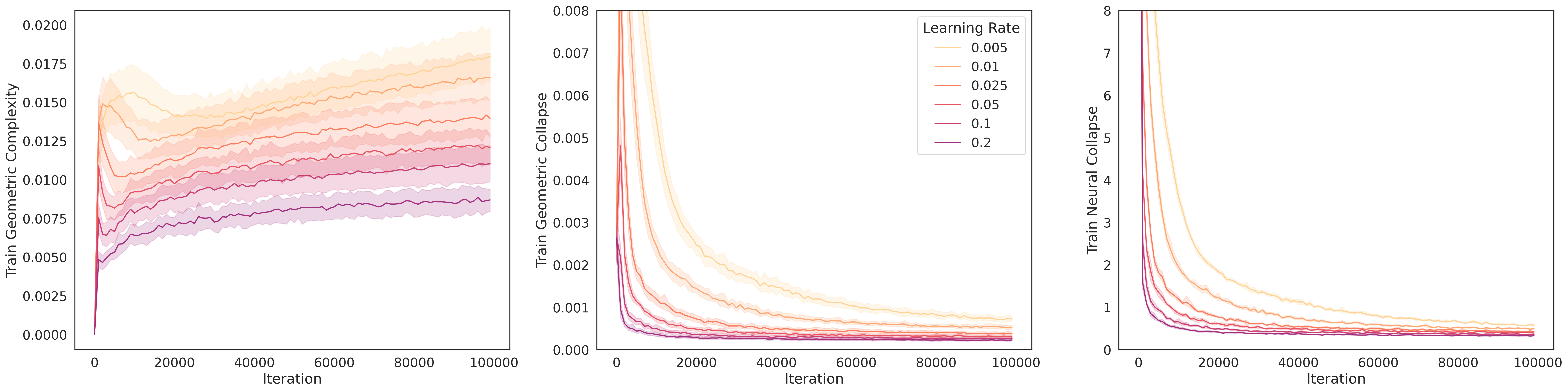}
\includegraphics[width=\linewidth]{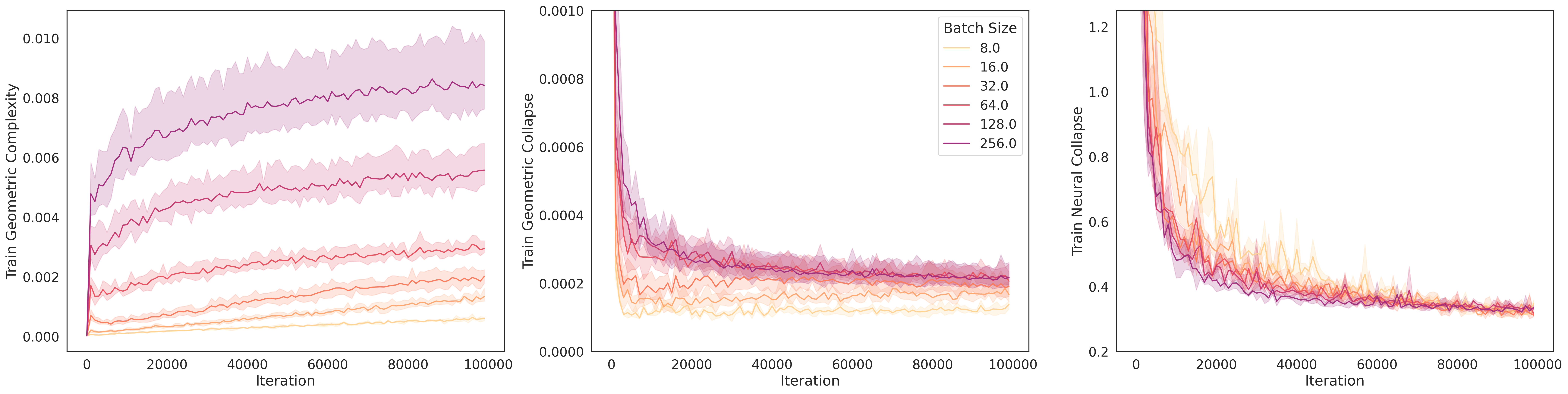}
\includegraphics[width=\linewidth]{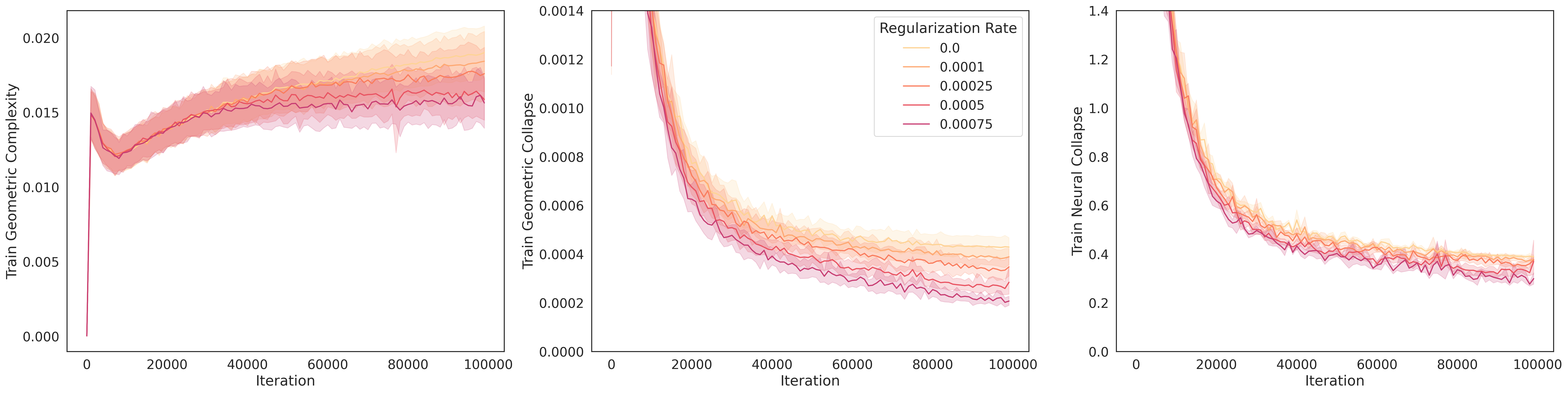}
\caption{Controlling the neural collapse via the model geometric complexity for ResNet-18 trained on CIFAR-10. Lower $\GC$ produces lower geometric collapse and more neural collapse (i.e., lower $\NC$) for  
\textbf{Top row:} increased learning rates, \textbf{Middle row:} decreased batch sizes, and 
\textbf{Bottom row:} increased L2 regularization.
}
\label{figure:cifar_resnet18_stage1_sweep}
\end{figure}

\subsubsection{MNIST on VGG-11}
In Figure \ref{figure:mnist_vgg11_stage1_sweep}, we trained a VGG-11 neural network \cite{simonyan2014very} on MNIST. The experimental setup is the same as described in Appendix \ref{appendix:stage_1_experiments}.  The results were all averaged over 5 random seeds for the neural network parameter default initialization. The plots have no smoothing applied to the learning curves.

{\bf Top row:} We swept over a learning rate range of  $\{0.0025, 0.005, 0.01, 0.025, 0.05, 0.1\}$ with a constant batch size of 512.
{\bf Middle row:} We swept over a batch size range of  $\{8, 16, 32, 64, 128, 256, 512, 1024\}$ with a constant learning rate of 0.005.
{\bf Bottom row:} We swept over a L2 regularization rate range of $\{0, 0.00001, 0.0001, 0.00025, 0.0005\}$ with learning rate 0.02 and batch size 512.

\begin{figure}[ht] 
\centering
\includegraphics[width=\linewidth]{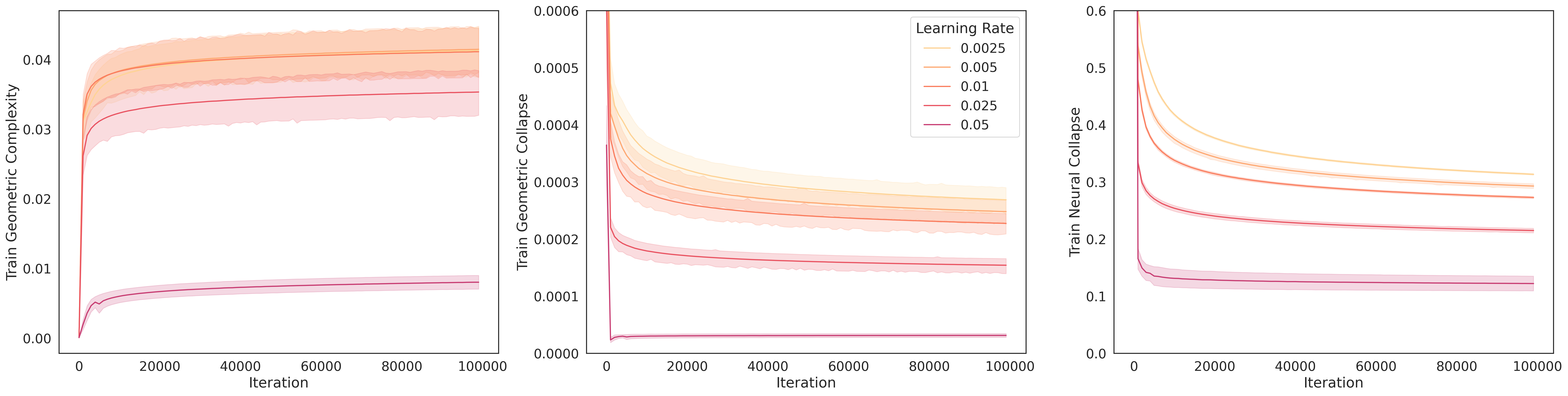}
\includegraphics[width=\linewidth]{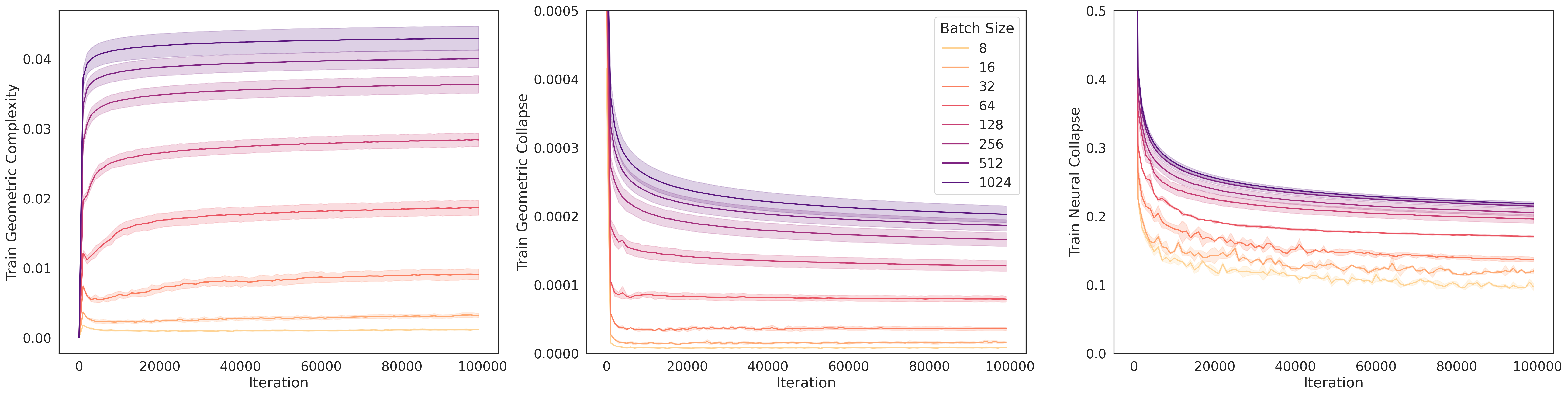}
\includegraphics[width=\linewidth]{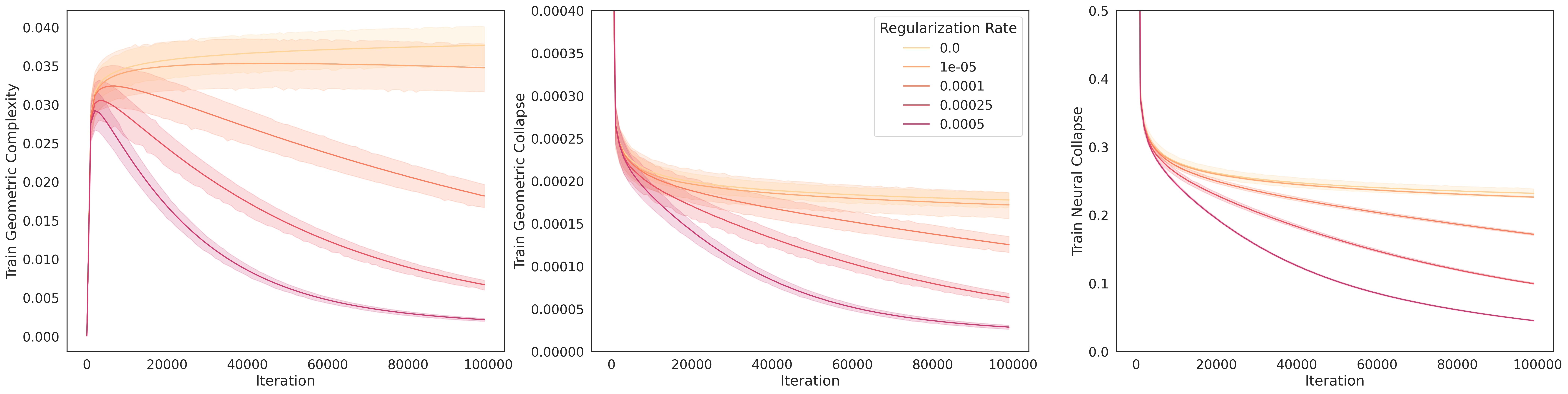}
\caption{Controlling the neural collapse via the model geometric complexity for VGG-11 trained on MNIST. Lower $\GC$ produces lower geometric collapse and more neural collapse (i.e., lower $\NC$) for  
\textbf{Top row:} increased learning rates, \textbf{Middle row:} decreased batch sizes, and 
\textbf{Bottom row:} increased L2 regularization.
}
\label{figure:mnist_vgg11_stage1_sweep}
\end{figure}

\clearpage
\newpage

\subsubsection{Fashion-MNIST on VGG-11}
In Figure \ref{figure:fashion-mnist_vgg11_stage1_sweep}, we trained a VGG-11 neural network \cite{simonyan2014very} on Fashion-MNIST. The experimental setup is the same as described in Appendix \ref{appendix:stage_1_experiments}.  The results were all averaged over 5 random seeds for the neural network parameter default initialization. The plots have no smoothing applied to the learning curves.

{\bf Top row:} We swept over a learning rate range of  $\{0.005, 0.01, 0.025, 0.05, 0.1, 0.2\}$ with a constant batch size of 256.
{\bf Bottom row:} We swept over a L2 regularization rate range of $\{0, 0.00001, 0.0001, 0.00025, 0.0005\}$ with learning rate 0.02 and batch size 512.

\begin{figure}[ht] 
\centering
\includegraphics[width=\linewidth]{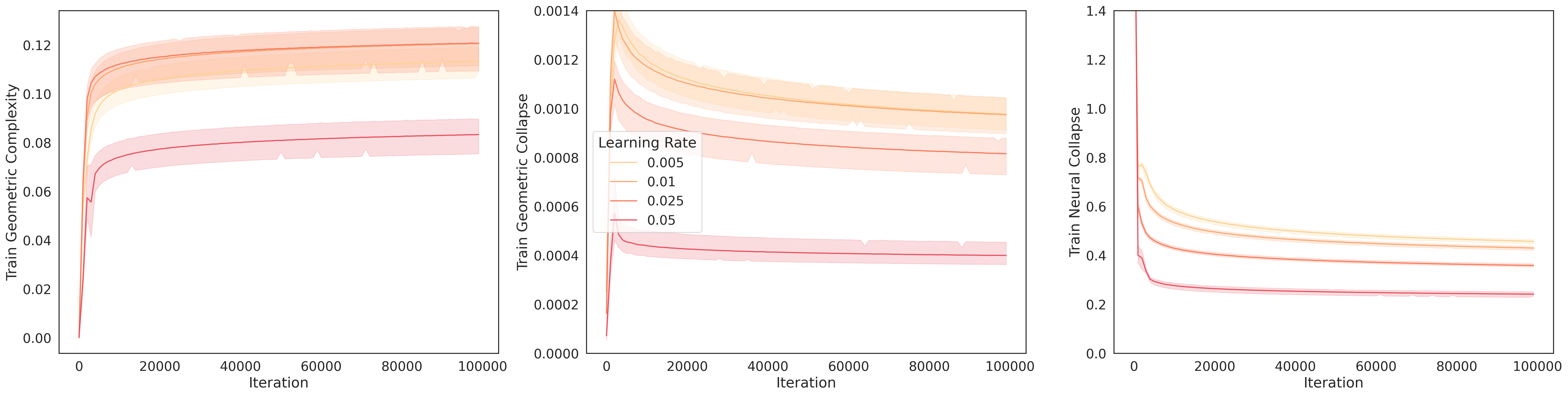}
\includegraphics[width=\linewidth]{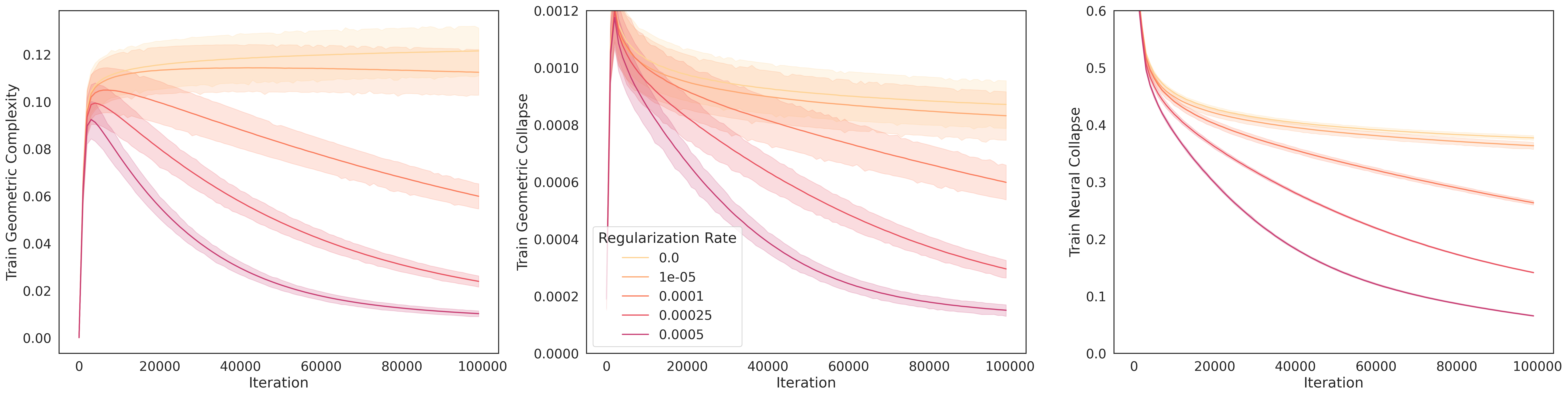}
\caption{Controlling the neural collapse via the model geometric complexity for VGG-11 trained on Fashion-MNIST. Lower $\GC$ produces lower geometric collapse and more neural collapse (i.e., lower $\NC$) for  
\textbf{Top row:} increased learning rates, 
\textbf{Bottom row:} increased L2 regularization.
}
\label{figure:fashion-mnist_vgg11_stage1_sweep}
\end{figure}

\subsubsection{Cifar-100 on ResNet50}
In Figure \ref{figure:cifar_resnet50_stage1_sweep}, we trained a ResNet-50 neural network on CIFAR-100. The experimental setup is the same as described in Appendix \ref{appendix:stage_1_experiments}.  The results were all averaged over 5 random seeds for the neural network parameter default initialization. The plots have no smoothing applied to the learning curves.

{\bf Top row:} We swept over a learning rate range of  $\{0.005, 0.01, 0.05, 0.1, 0.2\}$ with a constant batch size of 256.
{\bf Bottom row:} We swept over a L2 regularization rate range of $\{0.0, 0.0001, 0.00025, 0.0005, 0.00075\}$ with learning rate 0.02 and batch size 256.

\begin{figure}[ht] 
\centering
\includegraphics[width=\linewidth]{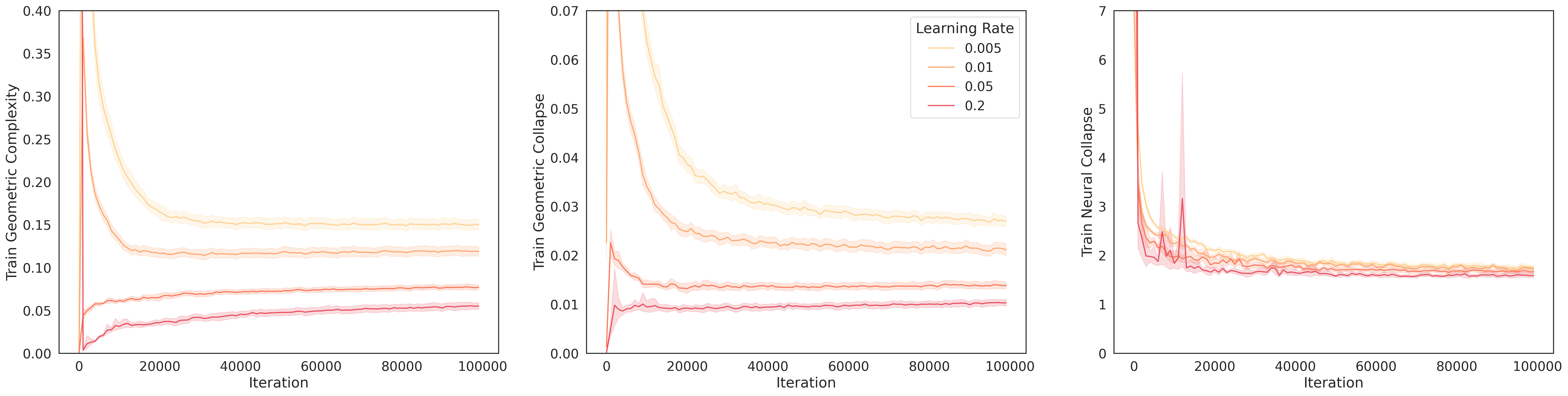}
\includegraphics[width=\linewidth]{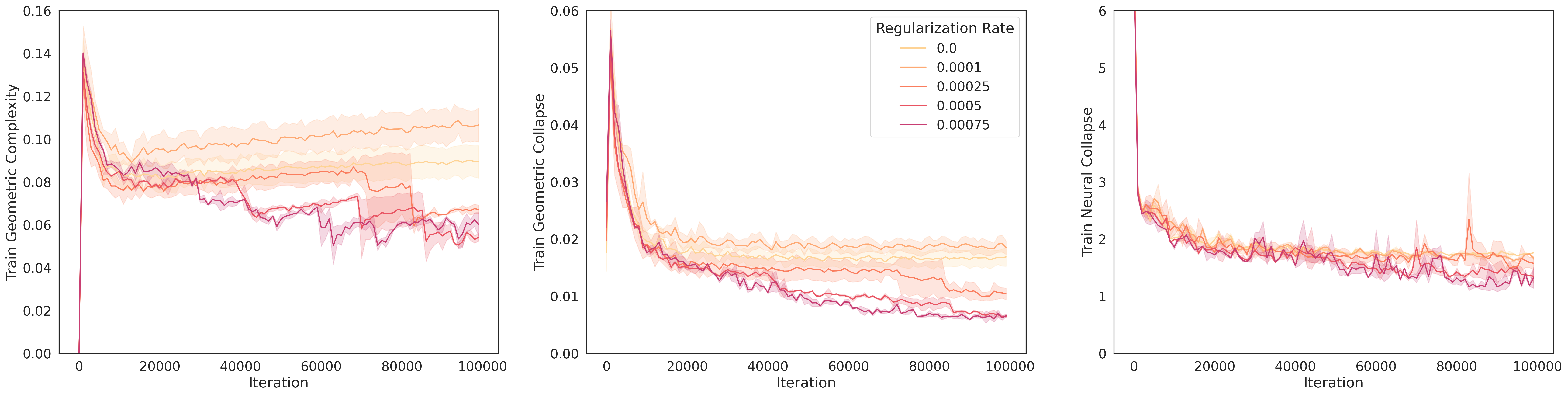}
\caption{Controlling the neural collapse via the model geometric complexity for ResNet-50 trained on CIFAR-100. Lower $\GC$ produces lower geometric collapse and more neural collapse (i.e., lower $\NC$) for  
\textbf{Top row:} increased learning rates, \textbf{Bottom row:} increased L2 regularization.
}
\label{figure:cifar_resnet50_stage1_sweep}
\end{figure}

\newpage
\subsubsection{Lower Pre-trained GC Leads to Improved Fine-tuning: mini-ImageNet dataset  on VGG architecture}\label{appendix:imagenet}

In Figure \ref{figure:transfer_lr_sweep_mini_imagenet}, we trained a VGG16 neural network (as described in \cite{simonyan2015very}) 
 implemented in Flax on Mini-ImageNet (as described in \cite{vinyals2016matching}). We used 10 classes with 600 examples per class for training, with a 10/90 train/test split. To evaluate the downstream performance we used 100 downstream tasks consisting of 5 labels for few shot learning randomly chosen from a pool of 20 separate from the training labels. The experimental setup was the same as described in section \ref{appendix:figure4}. Namely, we reported 1) the geometric complexity of the embedding layer measured on the source dataset (Source GC), 2) the neural collapse as measured by \eqref{equation:neural_collapse} measured and averaged on the 100 random samples of 5-label samples from the target dataset, 3) the test accuracy obtained by solving the normal equation directly for a ridge regression with only 5 examples per class obtained from the target set. These three quantities were measured every 1000 steps of a very long pre-training of 100,000 steps.
The optimizer was plain SGD (note we obtained similar results with momentum) without any regularization nor schedule to avoid masking effects. We used random crop and random flip for data augmentation. The results were all averaged over 5 random seeds for the neural network parameter default initialization.
We swept over a learning rate range of  $\{0.001, 0.005, 0.01\}$ with a constant batch size of 256.
Each sweep took roughly 10h of training on a single Google Cloud TPU V3 accessed via a Google colab. 
 
\begin{figure}[ht] 
\centering
\includegraphics[width=\linewidth]{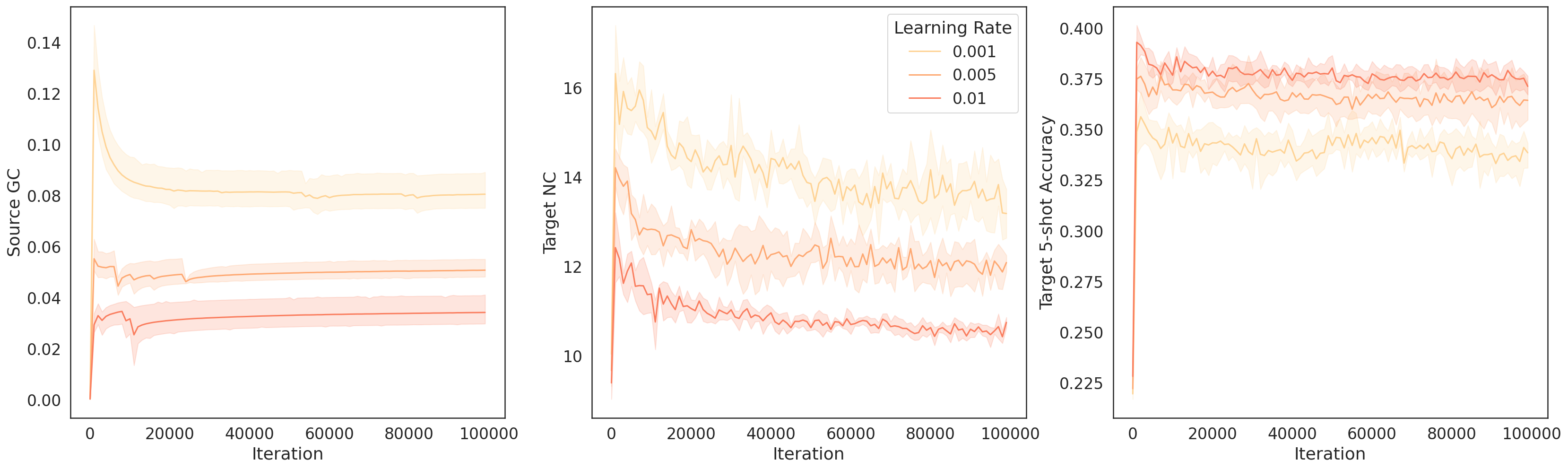}
\caption{Controlling target NC through source GC on mini-imagenet with VGG-16: Increased learning rates produces lower $\GC$ and more neural collapse (i.e. lower $\NC$) resulting in higher 5-shot transfer accuracy.
}
\label{figure:transfer_lr_sweep_mini_imagenet}
\end{figure}

\clearpage
\newpage

\subsubsection{Direct GC regularization increases neural collapse}
\label{appendix:gc_regularization}

To mitigate possible confounding factors due to batch size and learning rate manipulations, we train a VGG-13 model \cite{simonyan2015very} on CIFAR-10 with explicit GC regularization using a fixed learning rate of $0.01$ and a batch size of $256$. We explore increasing levels of explicit GC regularization with rates $10^{-6}, 10^{-5}, 10^{-3}$ and plot the results in Figure \ref{figure:gc_regularization}. We observe the same predictions as in our experiments which indirectly lower the GC through learning rates, batch sizes and L2 regularization. Namely, lower GC produces lower NC values (i.e., increased neural collapse). Note that in this experiment we regularized w.r.t.~the GC computed at the logit layer rather than the embedding GC. This is because the high dimension of the embedding layer makes taking the gradient of the embedding GC prohibitively expensive. Already, regularization with GC taken at the logit layer was possible only for one seed. 
That being said, explicit regularization by GC computed at the logit layer provides a direct way to control GC computed at the embedding layer.

\begin{figure}[h]
\centering
\includegraphics[width=\linewidth]{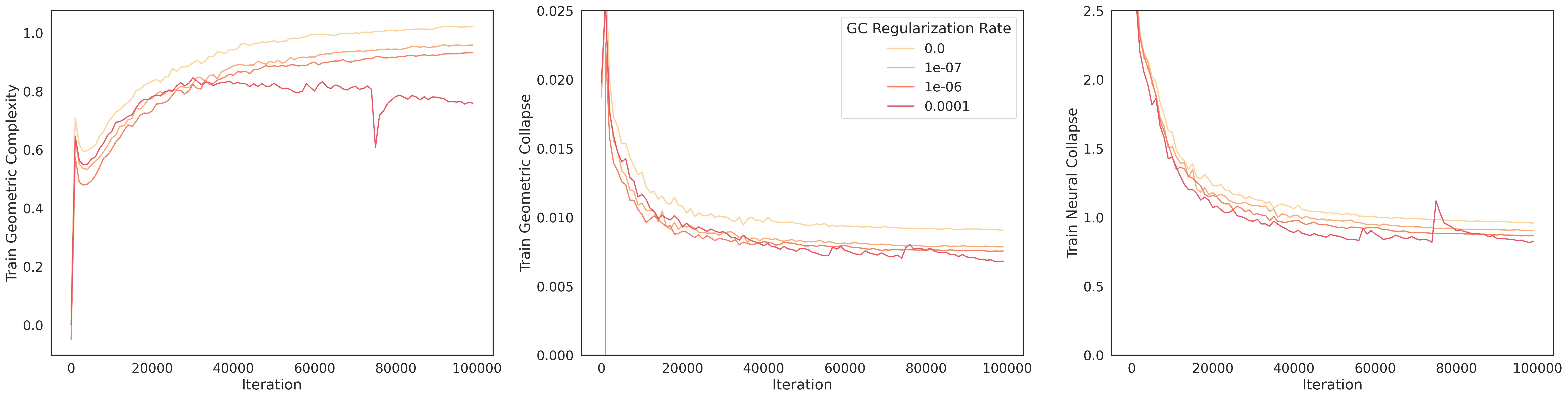}
\vspace{-.15in}
\caption{VGG-13 on CIFAR-10 with explicit GC regularization (one seed).}
\label{figure:gc_regularization}
\end{figure}

\subsection{Discussion on the Poincar\'e inequality}
\label{appendix:poincare}

Recall that a distribution $Q$ satisfies a Poincar\'e inequality if, for all differentiable functions $v$ defined on the support of $Q$, there exists a constant $c > 0$ such that 
\begin{equation}
\operatorname{Var}_v(Q) \leq c\mathbb E_{x\sim Q}\left[\|\nabla_x v\|_F^2\right].
\end{equation}

In this section, we argue that the Poincare Inequality (PI) is a mild and natural assumption to make. For instance, the Gaussian distribution, mixtures of Gaussians with equal (or different) variances, mixtures of Gaussians and sub-Gaussians, mixtures of uniform and Gaussian measures; and any log-concave probability measure all satisfy a PI; see, for example, \cite{bakry2008simple, schlichting2019poincare}. The same is true for most distributions of importance in probability and statistics; see \cite{pillaud-vivien20a}.

The Poincar\'e constant $c$ itself can take very high values depending on the spread of the distribution. For example, for a standard normal distribution the Poincare constant is 1 while for a multivariate normal distribution the Poincare constant equals the largest eigenvalue of its covariance matrix which can be any positive real number. Intuitively speaking, the Poincar\'e constant will increase if the space is stretched in any direction and decrease if it is compressed in any direction.

The PI has also been assumed to hold for real life image datasets, where it has been used to help improve GAN convergence, as in \cite{pmlr-v139-khrulkov21a}. It is also a key assumption to understand the role of over-parameterization in generalization as happens for large neural networks (cf.~\cite{bubeck2023universal} which frames this as an equivalent isoperimetry condition).

On the contrary, non-PI distributions are considered pathological; they can be constructed for instance by artificially concatenating distributions with fully disjoint support. See \cite{munn2023neural} for a construction of pathological examples of distributions not satisfying a Poincar\'e inequality.

\end{document}